\documentclass[twocolumn, switch]{article} % Method A for two-column formatting

\usepackage{preprint}
\usepackage{longtable} 
\usepackage{array}
\usepackage{amsmath, amsthm, amssymb, amsfonts}

\usepackage[numbers,square]{natbib}
\usepackage[utf8]{inputenc}	% allow utf-8 input
\usepackage[T1]{fontenc}	% use 8-bit T1 fonts
\usepackage{xcolor}		% colors for hyperlinks
\usepackage[colorlinks = true,
            linkcolor = purple,
            urlcolor  = blue,
            citecolor = cyan,
            anchorcolor = black]{hyperref}	% Color links to references, figures, etc.
\usepackage{booktabs} 		% professional-quality tables
\usepackage{nicefrac}		% compact symbols for 1/2, etc.
\usepackage{microtype}		% microtypography
\usepackage{lineno}		% Line numbers
\usepackage{float}			% Allows for figures within multicol

\usepackage{lipsum}		%  Filler text

\usepackage{newfloat}
\DeclareFloatingEnvironment[name={Supplementary Figure}]{suppfigure}
\usepackage{sidecap}
\sidecaptionvpos{figure}{c}

\usepackage{titlesec}
\titlespacing\section{0pt}{12pt plus 3pt minus 3pt}{1pt plus 1pt minus 1pt}
\titlespacing\subsection{0pt}{10pt plus 3pt minus 3pt}{1pt plus 1pt minus 1pt}
\titlespacing\subsubsection{0pt}{8pt plus 3pt minus 3pt}{1pt plus 1pt minus 1pt}

\usepackage{tikz,xcolor,hyperref}

\definecolor{lime}{HTML}{A6CE39}
\DeclareRobustCommand{\orcidicon}{
	\begin{tikzpicture}
	\draw[lime, fill=lime] (0,0) 
	circle [radius=0.16] 
	node[white] {{\fontfamily{qag}\selectfont \tiny ID}};
	\draw[white, fill=white] (-0.0625,0.095) 
	circle [radius=0.007];
	\end{tikzpicture}
	\hspace{-2mm}
}
\foreach \x in {A, ..., Z}{\expandafter\xdef\csname orcid\x\endcsname{\noexpand\href{https://orcid.org/\csname orcidauthor\x\endcsname}
			{\noexpand\orcidicon}}
}
\title{Vector Symbolic Policy Gradient}
\newcommand\titleheader{Vector Symbolic Policy Gradient}
\usepackage{amsthm}
\newtheorem{proposition}{Proposition}
\newtheorem{lemma}{Lemma}
\newtheorem{corollary}{Corollary}
\theoremstyle{remark}
\newtheorem{remark}{Remark}

\usepackage{xwatermark}
\usepackage{mathtools}
\usepackage{amssymb}
\usepackage{algorithm}
\usepackage{algorithmic}
\usepackage{newfloat}
\usepackage{authblk}

\author[1]{Ryozo Masukawa}
\author[1]{Sanggeon Yun}
\author[1]{SungHeon Jeong}
\author[1]{Hyunwoo Oh}
\author[1]{Raheeb Hassan}
\author[2]{Pietro Mercati}
\author[3]{Nathaniel D. Bastian}
\author[4]{Mahdi Imani}
\author[1]{Mohsen Imani}

\affil[1]{University of California, Irvine}
\affil[2]{Intel Corporation}
\affil[3]{Johns Hopkins University}
\affil[4]{Northeastern University}

\begin{document}

\twocolumn[ % Method A for two-column formatting
  \begin{@twocolumnfalse} % Method A for two-column formatting
  
\maketitle

%\keywords{First keyword \and Second keyword \and More} % (optional)
\vspace{0.35cm}

  \end{@twocolumnfalse} % Method A for two-column formatting
] % Method A for two-column formatting

%\begin{multicols}{2} % Method B for two-column formatting (doesn't play well with line numbers), comment out if using method A

%%%%%%%%%%%%%%%  Main text   %%%%%%%%%%%%%%%
% \linenumbers

%%%%%%%%%%%% Supplementary Methods %%%%%%%%%%%%
%\footnotesize
%\section*{Methods}

%%%%%%%%%%%%% Acknowledgements %%%%%%%%%%%%%
%\footnotesize
%\section*{Acknowledgements}

%%%%%%%%%%%%%%   Bibliography   %%%%%%%%%%%%%%
\begin{abstract}
% Hyperdimensional computing (HDC) offers lightweight distributed representations with simple algebraic learning rules and natural tolerance to noisy memory, but its role in discrete-action policy-gradient reinforcement learning (RL) remains underexplored. We introduce Vector-Symbolic Policy Gradient (VSPG), a categorical actor that stores one unit-norm hypervector per action and scores actions by similarity to an encoded state. Under the standard softmax policy-gradient surrogate, we show that the exact actor update is advantage-weighted hypervector bundling followed by row-wise normalization: visited state hypervectors are added to selected action memories and subtracted from competing memories according to their current probabilities. We further show that trained action hypervectors form compressed kernel memories, storing finite advantage-weighted kernel expansions over experience while retaining fixed-size inference cost. This view explains how encoder-induced similarity can transfer advantage evidence across nearby states without enumerating past samples at deployment. We evaluate VSPG on classic control, MiniGrid, and multi-agent SustainGym building-control tasks against DNN, linear, QHD, and matched PPO-style baselines. Across these benchmarks, VSPG achieves competitive final performance and favorable sample-efficiency trends. Under post-training quantization and random bit-flip corruption of stored actor parameters, VSPG degrades more gracefully than neural and linear actors, supporting its use in resource-constrained RL on unreliable edge and embedded systems.
Vector Symbolic Architecture (VSA) is built around a simple idea: distributed memories can be learned through lightweight algebra and remain useful even when their bits are unreliable. Yet this perspective has rarely been connected directly to discrete-action policy gradients. We introduce Vector-Symbolic Policy Gradient (VSPG), a categorical actor that represents each action by a unit-norm hypervector and chooses actions by similarity to an encoded state. 
We show that the standard softmax policy-gradient step has an exact vector-symbolic interpretation: advantage-weighted state hypervectors are bundled into the selected action memory and suppressed in competing memories, followed by row-wise normalization, so the actor trains in closed form with no optimizer state and logits bounded by construction.
Over training, these memories become fixed-size compressed kernel expansions that transfer advantage evidence across similar states without retaining past samples at inference. On classic control, MiniGrid, and multi-agent SustainGym, VSPG achieves competitive returns with favorable learning speed. Its distributed action memories also degrade substantially more gracefully than neural and linear actors under post-training quantization and random bit flips, making VSPG a promising actor for unreliable edge systems. An anonymized code is available \href{https://github.com/BiasLabProjects/VSPG.git}{here}.
\end{abstract}

\section{Introduction}

Vector Symbolic Architecture (VSA), also known as Hyperdimensional computing (HDC), is a brain-inspired computing paradigm rooted in theories of distributed
representation from cognitive science~\cite{kanerva2009hyperdimensional,fhrr}. VSA represents data using
high-dimensional distributed vectors, or hypervectors, and performs computation
through simple operations such as similarity search, bundling, binding, and
normalization.

Two properties are crucial for using VSA as a learning representation. First,
independently generated hypervectors are nearly orthogonal in high dimension,
allowing many pieces of information to be superposed in a single memory with
limited interference and supporting graceful degradation under noise~\cite{evidence_quasi_orthogonal,gorban2018blessing}. Second, an VSA encoder
$\phi$ maps inputs into an explicit high-dimensional inner-product space, where
$\phi(x)^\top \phi(x')$ can be designed to preserve a meaningful similarity
between inputs. This gives VSA encoders a theoretically grounded connection to
kernel-approximation methods~\cite{thomas2021theoretical,nyshd}. Together,
these properties have motivated the use of VSA in lightweight and
resource-constrained learning systems, with recent work spanning intelligent
sensing, hardware acceleration of IoT devices~\cite{hdc_application_1_fpga,hdc_application2_fpga,hdc_application3_fpga,
hdc_application4_iot,hdc_application_5_asic}.
\begin{figure}[t]
    \centering
    \includegraphics[width=\linewidth]{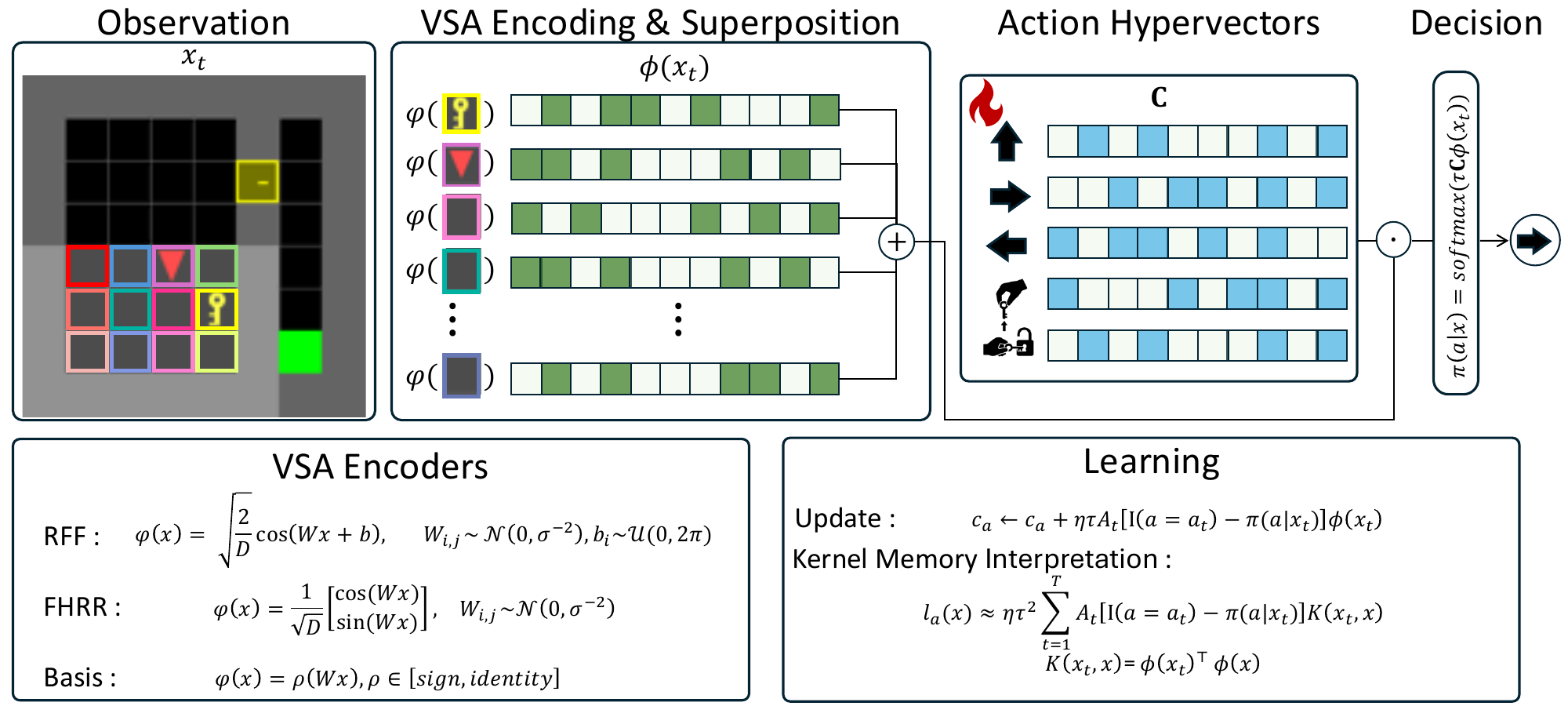}
%    \caption{Overview of VSPG. A vector-symbolic policy implements kernel-memory reinforcement learning.}
    \caption{Overview of VSPG. A fixed encoder maps the observation to a hypervector, action hypervectors score it by inner products, and the exact policy-gradient step bundles the encoded state into the kernel action memories.}
    \label{fig:teaser}
\end{figure}

% These properties are particularly relevant to deployment robustness in
% Reinforcement Learning (RL), where practical agents may run under
% low-precision execution or unreliable memory, and bit-level corruption of
% stored policy parameters can degrade autonomous decision making
% ~\cite{dulac2021challenges,bit_flip_1_rl,bit_flip_6_rl}. VSA provides a natural
% representation-level substrate for lightweight RL policies exposed to
% quantization and model-state faults. Existing VSA-based RL methods have mostly
% used hypervectors at the algorithmic and application level, as Q-value
% approximators trained by Bellman regression~\cite{qhd,bqhd} or as components
% of continuous actor--critic control~\cite{hdpg,actor_critic} but case theoretically underexplored: can action
% hypervectors directly parameterize a categorical actor, and can its
% policy-gradient update be written as a vector-symbolic memory operation?
These properties are particularly relevant to deployment robustness in
reinforcement learning (RL), where practical agents may run under
low-precision execution or unreliable memory, and bit-level corruption of
stored policy parameters can degrade autonomous decision making
~\cite{dulac2021challenges,bit_flip_1_rl,bit_flip_6_rl}. VSA provides a natural
representation-level substrate for lightweight RL policies exposed to
quantization and model-state faults. Existing VSA-based RL methods have mostly
used hypervectors at the algorithmic and application level, either as Q-value
approximators trained by Bellman regression~\cite{qhd,bqhd} or as components
of continuous actor--critic control~\cite{hdpg,actor_critic}, but leave open a basic question: can action hypervectors directly parameterize a categorical actor, and can its policy-gradient update be written as a vector-symbolic memory operation?
%but leave open a
%theoretically simple yet underexplored question: can action hypervectors
%directly parameterize a categorical actor, and can its policy-gradient update
%be written as a vector-symbolic memory operation?

\providecommand{\cmk}{\checkmark}
\providecommand{\emk}{$\bullet$}

\begin{table*}[ht]
\centering
\begin{minipage}{\linewidth}
\centering

\caption{Positioning of VSPG among the method families it builds on.}
\label{tab:positioning}

\small
\setlength{\tabcolsep}{5pt}

\resizebox{\linewidth}{!}{%
\begin{tabular}{@{}llccccc@{}}
\toprule
Method & Setting
& \shortstack{Closed-form\\update}
& \shortstack{Exact PG\\identity}
& \shortstack{Kernel\\expansion}
& \shortstack{Fixed-size\\inference}
& \shortstack{Bit-flip\\robustness} \\
\midrule

DNN softmax actor~\cite{ppo,mappo}
& disc./cont., PG & -- & -- & -- & \cmk & -- \\

Log-linear softmax PG~\cite{sutton1999policy,mei2020global}
& disc., PG & \cmk & \cmk & -- & \cmk & -- \\

Kernel (RKHS) policy search~
\cite{kernel3_pmlr-v38-lever15,kernel4_zhang2025residual}
& cont., PG & \cmk & \cmk & \cmk & -- & -- \\

\addlinespace[2pt]

QHD and extensions~\cite{qhd,bqhd}
& disc., value & \cmk & -- & -- & \cmk & -- \\

NavHD~\cite{lee2025navhd}
& disc., value & -- & -- & -- & \cmk & -- \\

HDPG~\cite{hdpg}
& cont., PG & -- & -- & -- & \cmk & -- \\

\addlinespace[2pt]

Fault-aware robust RL~\cite{bit_flip_1_rl,bit_flip_6_rl}
& disc., value & -- & -- & -- & \cmk & \cmk \\

\midrule

VSPG (ours)
& disc., PG & \cmk & \cmk & \cmk & \cmk & \cmk \\

\bottomrule
\end{tabular}%
}
\end{minipage}
\end{table*}

We answer this question with \emph{Vector-Symbolic Policy Gradient} (VSPG), a discrete-action actor that represents each action by a unit-norm hypervector and scores it by similarity to the encoded state. Under the standard softmax policy-gradient surrogate, we prove that its update is exactly advantage-weighted hypervector bundling followed by normalization, and therefore supports standard advantage estimators~\cite{gae,mappo}. We further show that each trained action hypervector is a fixed-size compressed kernel memory, storing an advantage-weighted kernel expansion over visited states and transferring evidence according to the encoder-induced similarity. This provides a concrete mechanism that can support sample-efficient learning without increasing inference-time memory. Finally, for bipolar action memories, we prove that greedy action selection is stable under random bit flips, with failure probability decaying exponentially in the hypervector dimension. VSPG thus connects VSA action memories, log-linear policy gradients, and kernel policy search while providing a quantitative robustness guarantee.

{
% We evaluate VSPG on classic control, MiniGrid~\cite{gymnasium,minigrid},
% and multi-agent SustainGym building control~\cite{yeh2023sustaingym} comparing against DNN, linear actor, and QHD~\cite{qhd} as a strong value-based VSA baseline for discrete-action
% RL. Across these benchmarks, VSPG achieves competitive final performance and superior sample-efficiency trends than matched actor baselines. We further evaluate robustness to bit-flip corruption in the stored action hypervectors, directly testing whether the distributed action-memory representation degrades gracefully under model-state faults.
We evaluate VSPG on classic control, MiniGrid~\cite{gymnasium,minigrid},
and multi-agent SustainGym building control~\cite{yeh2023sustaingym},
against DNN and linear actor baselines, as well as QHD~\cite{qhd}, a strong
value-based VSA method for discrete-action RL. Across these benchmarks, VSPG
achieves competitive final performance and stronger sample efficiency than
matched actor baselines. Under post-training bit-flip corruption, its
distributed action memories also retain performance substantially better than
both DNN and raw linear actors, demonstrating graceful degradation under
model-state faults.

Our contributions are as follows.
\begin{itemize}
\item We introduce VSPG, a discrete-action policy-gradient actor that represents each action with a unit-norm hypervector, and show that its update admits an exact vector-symbolic interpretation as advantage-weighted bundling followed by row normalization.

\item We show that trained action hypervectors form fixed-size compressed kernel memories and prove robustness of bipolar action memories to random bit flips.

\item We evaluate VSPG on classic control, MiniGrid, and multi-agent building control against neural, linear and value-based VSA, demonstrating competitive performance, favorable sample efficiency, and robustness to quantization and bit-level faults.
\end{itemize}

\section{Related Work}

\subsection{VSA and Kernel Policies for RL}
Prior VSA-RL methods mainly use hypervectors as lightweight function
approximators within existing RL formulations. HDPG~\cite{hdpg} addresses
continuous control with VSA-based Gaussian actors and critics, whereas
discrete-action methods largely follow the value-based QHD
framework~\cite{qhd,bqhd}, learning action-specific $Q$-value hypervectors
through Bellman-error-weighted bundling. This value-based line has also been
applied to cybersecurity, robotics, navigation, and sensing
applications~\cite{cyberrl,reacthd,lee2025navhd,hdflrobot}. VSPG instead
directly parameterizes a categorical softmax policy with action hypervectors,
extending vector-symbolic learning from value approximation to discrete-action
policy gradients. VSA methods have also demonstrated favorable data efficiency
in single-pass and online learning~\cite{onlinehd,neurips22_hdc_favorite}, and
prior VSA-RL studies report promising learning efficiency, with
NavHD~\cite{lee2025navhd} providing systematic multi-seed evidence in robotic
navigation; VSPG examines whether this behavior extends to discrete-action
policy gradients. VSPG further connects to kernelized and log-linear policy
search: kernel policies represent action scores as expansions over
experience~\cite{kernel5_bagnell2003policy,kernel3_pmlr-v38-lever15,
kernel4_zhang2025residual}, whereas VSPG superposes this expansion into one
fixed-size hypervector per action, requiring only one inner product per action
at inference. With a fixed encoder, VSPG is a log-linear softmax policy over
random features~\cite{rahimi2008kitchen,sutton1999policy}, with optimization
covered by established policy-gradient theory~\cite{mei2020global,
agarwal2021theory}. As summarized in \autoref{tab:positioning}, VSPG connects
VSA-based RL, log-linear policy gradients, and kernel policy search through an
exact VSPG update, a fixed-size kernel-memory
expansion, and a stability guarantee under bit-level corruption.

\subsection{Robustness to Model-State Corruption in RL}
\label{sec:rw_robust_rl}

%RL policies deployed on resource-constrained systems can be affected not only by
%environmental uncertainty, but also by corruption of the stored policy itself~\cite{robustness_rl, weather_noise, dulac2021challenges}.
% Beyond environmental uncertainty,
RL policies deployed on resource-constrained systems may be affected by corruption of the stored policy itself~\cite{robustness_rl,weather_noise,dulac2021challenges}. Low-voltage operation and approximate memory can introduce bit-level errors in model parameters, reducing the reliability of autonomous decisions. Prior work mainly addresses this problem through fault-aware training, bit-error injection, or hardware-level adaptation for neural policies~\cite{bit_flip_2,bit_flip_6_rl,bit_flip_1_rl}. VSPG instead takes a complementary representation-level approach, storing a discrete-action policy as distributed vector-symbolic action memories. We evaluate degradation under quantization and model-state bit corruption, while our theoretical analysis provides an exponential failure bound in the hypervector dimension for bipolar memories at a fixed similarity margin (\autoref{tab:positioning}).

\section{Preliminaries}
\label{sec:prelim}
\subsection{Reinforcement Learning and Policy Gradients}
\label{sec:prelim_rl}

We formulate the problem as a decentralized partially observable Markov
decision process (Dec-POMDP), which includes fully observed single-agent MDPs
as a special case. A Dec-POMDP is defined as
$\mathcal{M}
=
\bigl(
\mathcal{I},
\mathcal{S},
\{\mathcal{A}_i\}_{i\in\mathcal{I}},
P,
r,
\rho_0,
\{\mathcal{X}_i\}_{i\in\mathcal{I}},
O,
\gamma
\bigr),$
where $\mathcal{I}$ is the set of agents, $\mathcal{S}$ is the state space,
$\mathcal{A}_i$ and $\mathcal{X}_i$ are the action and observation spaces of
agent $i$, $P$ is the state transition probability, $r$ is the shared reward function,
$\rho_0$ is the initial-state distribution, $O$ are the observations,
and $\gamma\in[0,1]$ is the discount factor. At time $t$, the environment is
in state $s_t$, each agent receives observation $x_t^i$, and selects
$a_t^i \sim \pi_{\theta_i}(\cdot \mid x_t^i).$
The joint action
$\mathbf{a}_t=(a_t^i)_{i\in\mathcal{I}}$ determines the subsequent transition
and reward. 
% Single-agent RL is recovered when $|\mathcal{I}|=1$, while fully
% observed settings are recovered when the policy observation determines the
% environment state, in particular when $x_t^i=s_t$.
Let $\boldsymbol{\theta}=\{\theta_i\}_{i\in\mathcal{I}}$ denote the joint
policy parameters. The objective is to maximize the expected discounted return
\begin{equation}
\label{eq:rl_objective}
J(\boldsymbol{\theta})
=
\mathbb{E}_{\tau\sim\boldsymbol{\pi}_{\boldsymbol{\theta}}}
\left[
\sum_{t=0}^{T-1}
\gamma^t r_t
\right],
\end{equation}
where $\tau$ is a trajectory induced by the joint policy and environment.
For a factorized decentralized policy, the policy gradient for agent $i$ is
\begin{equation}
\label{eq:pg}
\nabla_{\theta_i}J(\boldsymbol{\theta})
=
\mathbb{E}_{\boldsymbol{\pi}_{\boldsymbol{\theta}}}
\left[
\sum_{t=0}^{T-1}
\hat{A}_t^i
\nabla_{\theta_i}
\log\pi_{\theta_i}(a_t^i\mid x_t^i)
\right],
\end{equation}
where $\hat{A}_t^i$ is an advantage estimate
~\cite{sutton1998reinforcement,sutton1999policy}, obtained from returns or an
actor--critic estimator such as Generalized Advantage Estimation (GAE)~\cite{actor_critic,gae,ppo,mappo}. VSPG
uses the same policy-gradient objective but represents each categorical actor
with action hypervectors, yielding an advantage-weighted bundling update that
applies to both single- and multi-agent settings.
% where $\hat{A}_t^i$ is an advantage estimate
% ~\cite{sutton1998reinforcement,sutton1999policy}. It may be computed from
% returns and an action-independent baseline or estimated by an actor--critic
% method such as generalized advantage estimation (GAE)
% ~\cite{actor_critic,gae,ppo,mappo}. Positive advantage increases the relative
% probability of the sampled action, whereas negative advantage decreases it.
% VSPG retains this objective and advantage signal but parameterizes each
% categorical actor with action hypervectors, turning its score-function update
% into advantage-weighted hypervector bundling. The same update therefore applies
% to single- and multi-agent settings, whether the advantage is estimated locally
% or by a centralized critic.

\subsection{VSA Basics}
\label{sec:hdc_basics}

VSA encodes inputs as high-dimensional hypervectors and computes through
bundling (superposition), binding, and similarity search. Although bundling and binding can be
implemented in different ways, in this paper they refer to element-wise addition and element-wise multiplication,
respectively. Let $\mathbf{h}=\varphi(x)\in\mathbb{R}^{D}$ denote the
hypervector produced by a fixed encoder $\varphi$, normalized when used for
similarity search. Random hypervectors are quasi-orthogonal, allowing many
items to be bundled with limited interference.

% VSA represents inputs as high-dimensional hypervectors and performs
% computation through simple algebraic operations: superposition, which
% accumulates hypervectors into one memory; binding, which composes
% hypervectors into an associated hypervector nearly orthogonal to its
% inputs; and similarity search.
% Let $\mathbf{h} = \varphi(x) \in \mathbb{R}^{D}$ denote the hypervector
% produced by a fixed encoder $\varphi$, where $D$ is the hypervector dimension. We assume hypervectors are normalized when used for similarity search.
% High-dimensional random hypervectors are quasi-orthogonal, allowing many to be superposed while remaining approximately distinguishable.
% A key property of high-dimensional random representations is
% quasi-orthogonality: unrelated random hypervectors have near-zero inner products, so many symbols can be superposed while remaining approximately separable.

{\begin{lemma}[Quasi-orthogonality]
\label{lemma:quasi}
Let $\mathbf{u},\mathbf{v}\in\mathbb{R}^{D}$ be independent random unit
vectors, at least one of which is uniformly distributed on the sphere.
Then $\mathbb{E}[\mathbf{u}^{\top}\mathbf{v}]=0$ and, for every
$\delta\in(0,1)$,
\[
\Pr\Bigl[\,|\mathbf{u}^{\top}\mathbf{v}|\ge\sqrt{\tfrac{2\ln(2/\delta)}{D}}\,\Bigr]\le\delta .
\]
\end{lemma}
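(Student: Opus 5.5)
Without loss of generality I will take $\mathbf{u}$ to be the vector that is uniform on the sphere $S^{D-1}$. Because $\mathbf{u}$ and $\mathbf{v}$ are independent, I can condition on $\mathbf{v}$, prove both claims for a fixed unit vector $\mathbf{v}$, and then integrate over $\mathbf{v}$. Fix $\mathbf{v}$. The uniform law on the sphere is rotation invariant, so the distribution of $\mathbf{u}^{\top}\mathbf{v}$ does not depend on which unit vector $\mathbf{v}$ is. I can therefore take $\mathbf{v}=e_1$, so that $\mathbf{u}^{\top}\mathbf{v}=u_1$, the first coordinate of a uniform point on $S^{D-1}$.

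\emph{The mean.} The map $\mathbf{u}\mapsto-\mathbf{u}$ preserves the uniform law, so $u_1$ is symmetric about zero and $\mathbb{E}[u_1]=0$. Conditioning then gives $\mathbb{E}[\mathbf{u}^{\top}\mathbf{v}]=0$. Equivalently, $\mathbb{E}[\mathbf{u}]=0$, so $\mathbb{E}[\mathbf{u}^\top\mathbf{v}\mid\mathbf{v}]=0$.

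\emph{The tail.} I need the sub-Gaussian bound
\[
\Pr[|u_1|\ge t]\le 2e^{-Dt^2/2},\qquad t\in[0,1).
\]
Setting $t=\sqrt{2\ln(2/\delta)/D}$ makes the right-hand side equal to $\delta$. If $t\ge 1$ the event is null and the bound is trivial.

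To prove the tail bound I will use the classical spherical-cap estimate. The cap $\{u_1\ge t\}$ has normalized measure at most $e^{-Dt^2/2}$. The cleanest route I know is Ball's argument. For $t\ge 1/\sqrt{2}$ the cap lies inside a ball of radius $\sqrt{1-t^2}$, and the cone over the cap sits inside the unit ball, which gives measure at most $(1-t^2)^{D/2}\le e^{-Dt^2/2}$. For $t<1/\sqrt2$ one instead compares with a ball of radius $1/(2t)$.

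A possibly easier alternative is to use the Beta density of $u_1^2$ directly. One has $u_1^2\sim\mathrm{Beta}(\tfrac12,\tfrac{D-1}{2})$, so $u_1$ has density proportional to $(1-s^2)^{(D-3)/2}$. The tail integral can then be bounded by $\int_t^1(1-s^2)^{(D-3)/2}\,ds\le\int_t^\infty e^{-(D-3)s^2/2}\,ds$, compared against the normalizing constant.

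\emph{Main obstacle.} The difficulty is getting exactly the constant $D/2$ in the exponent rather than $(D-3)/2$ or $(D-1)/2$, uniformly in $t$ and $D$. I would first try to cite the standard cap bound $\sigma(\{u_1\ge t\})\le e^{-Dt^2/2}$ (Ball, \emph{An elementary introduction to modern convex geometry}, Lemma 2.2). That lemma is exactly this inequality, so both caps together give the factor 2.

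If a self-contained argument is preferred, I would use the Gaussian representation $u_1=g_1/\|g\|$ with $g\sim\mathcal{N}(0,I_D)$. Combined with a Chernoff bound, this yields the same inequality up to constants. I would then adjust the stated threshold only if the exact constant could not be recovered.

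Finally, integrating the conditional bound over $\mathbf{v}$ gives the unconditional statement.
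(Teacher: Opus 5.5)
Your proposal follows the paper's proof: condition on $\mathbf{v}$, use rotational invariance to reduce $\mathbf{u}^{\top}\mathbf{v}$ to the coordinate $u_1$ of a uniform point on the sphere, get the zero mean from symmetry, and obtain the tail by citing the spherical-cap bound $\Pr[|u_1|\ge t]\le 2e^{-Dt^2/2}$ with $t=\sqrt{2\ln(2/\delta)/D}$ (you cite Ball's Lemma 2.2, the paper cites Vershynin). One slip, harmless because you rely on the citation but fatal if you wrote the sketch out: your case split of Ball's argument is reversed, since the ball of radius $\sqrt{1-t^2}$ centred at $t e_1$ contains the cone over the cap only when $t\le 1/\sqrt{2}$ (it must contain the origin), whereas for $t>1/\sqrt{2}$ one uses the ball of radius $1/(2t)$ through the origin, which gives the bound $(2t)^{-D}\le e^{-Dt^2/2}$.
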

\begin{proof}
Condition on $\mathbf{v}$. By rotational invariance,
$\mathbf{u}^{\top}\mathbf{v}$ is distributed as one coordinate of a unit
vector, whose spherical-cap measure satisfies
$\Pr[|u_1|\ge t]\le 2e^{-Dt^{2}/2}$~\cite{vershynin2018high}; the tail
bound and zero mean follow.
% Condition on $\mathbf{v}$. By rotational invariance,
% $\mathbf{u}^{\top}\mathbf{v}$ is distributed as one coordinate of a uniform
% unit vector, whose spherical-cap measure satisfies
% $\Pr[|u_1|\ge t]\le 2e^{-Dt^{2}/2}$~\cite{vershynin2018high}; the tail
% bound and the zero mean follow.
\end{proof}}
% \begin{lemma}[Quasi-orthogonality]
% \label{lemma:quasi}
% Let $\mathbf{u},\mathbf{v}\in\mathbb{R}^{D}$ be independent unit-norm random
% hypervectors drawn from an isotropic distribution. Then
% \[
%     \mathbb{E}[\mathbf{u}^{\top}\mathbf{v}] = 0,
%     \qquad
%     |\mathbf{u}^{\top}\mathbf{v}| = O(D^{-1/2})
% \]
% with high probability.
% \end{lemma}
This property enables associative memory by bundling many examples into a
single prototype. A standard VSA classifier stores one prototype per class,
\begin{equation}
    \mathbf{C}_{\ell}
    =
    \sum_{i:y_i=\ell}
    w_i \phi(x_i).
\end{equation}
Prototypes superpose class examples, optionally weighted~\cite{onlinehd}, and predict by similarity search.
% In the simplest case, $w_i=1$, so each prototype is the unweighted
% superposition of encoded examples from class $\ell$. More adaptive VSA training
% rules assign sample-dependent weights~\cite{onlinehd}.
% Given a query $x_q$, prediction is performed by similarity search:
$\hat{y}=\arg\max_{\ell}    \delta\!\left(\phi(x_q),\mathbf{C}_{\ell}\right)$, where $\delta$ is an inner-product-based similarity such as cosine similarity. Quasi-orthogonality limits interference, while repeated or similar components reinforce. Under inner-product scoring, prototypes are linear weights over fixed hypervector features and may be formed by bundling or gradient-based optimization~\cite{neurips22_hdc_favorite}. VSPG applies this view to discrete-action policies: action hypervectors serve as policy weights, encoded states as fixed features, and policy-gradient learning becomes advantage-weighted bundling, yielding compressed kernel memories over visited states.
\section{Vector-Symbolic Policy Gradient}
%\section{Method: Vector Symbolic Policy Gradient}

\begin{algorithm}[h]
\caption{Vector-Symbolic Policy Gradient (VSPG)}
\label{alg:vspg}
\begin{algorithmic}[1]
\REQUIRE Fixed normalized encoder $\boldsymbol{\phi}$, action HVs
$\mathbf{C}=\{\mathbf{c}_a\}_{a\in\mathcal{A}}$, temperature $\tau$,
learning rate $\eta$
\STATE Initialize each $\mathbf{c}_a$ and rescale to $\|\mathbf{c}_a\|_2=1$
\FOR{each batch of episodes}
    \STATE Collect transitions using $\mathbf{s}_t=\boldsymbol{\phi}(x_t)$ and
    $\pi(a\mid x_t)=\operatorname{softmax}_a\!\bigl(\tau\,\mathbf{c}_a^{\top}\mathbf{s}_t\bigr)$
    \STATE Estimate advantages $\{A_t\}$
    \STATE $\Lambda_{t,a}\leftarrow
    A_t\,\tau\bigl(\mathbf{1}[a=a_t]-\pi(a\mid x_t)\bigr)$
    \STATE $\mathbf{C}\leftarrow\mathbf{C}+\eta\,\boldsymbol{\Lambda}^{\top}\mathbf{S}$
    \STATE $\mathbf{c}_a\leftarrow\mathbf{c}_a/\|\mathbf{c}_a\|_2$ for each $a\in\mathcal{A}$
\ENDFOR
\end{algorithmic}
\end{algorithm}

\subsection{Policy Representation and Update}

VSPG turns the VSA prototype classifier of the preliminaries into a
stochastic policy: class prototypes become action hypervectors, and
similarity scores become policy logits. Throughout, $x_t$ denotes the
policy input at time $t$ --- the state in fully observed tasks and the
observation otherwise. Every encoder normalizes its raw output
$\tilde{\boldsymbol{\phi}}(x)\in\mathbb{R}^{D}$ before it reaches the actor,
\begin{equation}
  \boldsymbol{\phi}(x)
  =
  \frac{\tilde{\boldsymbol{\phi}}(x)}{\|\tilde{\boldsymbol{\phi}}(x)\|_2},
  \qquad
  \|\boldsymbol{\phi}(x)\|_2=1.
  \label{eq:enc_norm}
\end{equation}
We write $\mathbf{s}_t=\boldsymbol{\phi}(x_t)$ and stack a batch of $T$
encoded inputs as $\mathbf{S}\in\mathbb{R}^{T\times D}$. 
% The actor parameters are the action hypervectors $\mathbf{C}\in\mathbb{R}^{|\mathcal{A}|\times D}$, one unit-norm row $\mathbf{c}_a$ per action. $\mathbf{c}_a$ is initialized independently from an isotropic random distribution and normalized to unit norm; the initialization is independent of the fixed encoder. 
The actor parameters are the action hypervectors
$\mathbf{C}\in\mathbb{R}^{|\mathcal{A}|\times D}$, with one unit-norm
row $\mathbf{c}_a$ per action. Unless otherwise stated, each row is
initialized independently from an isotropic Gaussian distribution and
normalized to unit norm, independently of the fixed encoder.
The policy scores each action by an inner product and takes a softmax,
\begin{equation}
  \pi(a\mid x)
  = \operatorname{softmax}_a\!\bigl(\tau\,\mathbf{C}\,\boldsymbol{\phi}(x)\bigr),
  \label{eq:policy}
\end{equation}
so, both factors being unit-norm, each logit is a bounded scaled cosine,
\begin{equation}
  \ell_a(x)
  = \tau\,\mathbf{c}_a^{\top}\boldsymbol{\phi}(x)
  \in [-\tau,\tau],
  \label{eq:logit}
\end{equation}
and $\tau>0$ sets how sharp the policy can be; a batch is scored in the
single matrix product $\tau\,\mathbf{S}\mathbf{C}^{\top}$. 
Because both factors are unit norm, each logit lies in $[-\tau,\tau]$. Thus, $\tau$ controls both the concentration of the stochastic policy and the scale of the policy-gradient update.
% {Because both factors are unit norm, the logits never leave $[-\tau,\tau]$, so at every parameter setting the policy satisfies $\pi(a\mid x)\le\bigl(1+(|\mathcal{A}|-1)e^{-2\tau}\bigr)^{-1}$. The unit-norm constraint therefore acts as a built-in entropy floor during training, which bounds update magnitudes and preserves exploration, and the greedy readout $a^{*}=\arg\max_a\mathbf{c}_a^{\top}\boldsymbol{\phi}(x)$ removes this floor at deployment. 
% The choice of $\tau$ thus controls both the step scale and the achievable determinism of the stochastic policy.}
Given actions $\{a_t\}$ and advantages $\{A_t\}$, we form
$\boldsymbol{\Lambda}\in\mathbb{R}^{T\times|\mathcal{A}|}$ and update:
\begin{align}
\Lambda_{t,a}
&=
A_t\,\tau\!\left(
\mathbf{1}[a=a_t]-\pi(a\mid x_t)
\right),
\label{eq:lambda}
\\[3pt]
\mathbf{C}
&\leftarrow
\operatorname{row\text{-}norm}\!\left(
\mathbf{C}
+\eta\,\boldsymbol{\Lambda}^{\top}\mathbf{S}
\right).
\label{eq:update}
\end{align}
where $\eta>0$ is the learning rate,
$\mathbf{G}=\boldsymbol{\Lambda}^{\top}\mathbf{S}$ is one matrix
multiplication, and row normalization restores unit norms. The update is
closed-form: no gradient is backpropagated through the encoder or a neural
actor, and no optimizer state is kept. Since the per-row step scales with
$\eta\tau$ while $\tau$ also sets the logit scale \eqref{eq:logit}, the two
should therefore be selected jointly. 
VSPG is compatible with any advantage estimator. When Monte-Carlo estimates
are unreliable due to sparse rewards, a critic may optionally be used during
training to compute advantages, as in actor--critic methods~\cite{actor_critic}. The critic does
not update the action hypervectors or the encoder and is discarded at
deployment.

% The advantages may come from any
% estimator, consumed as fixed per-step weights; when rewards are too sparse
% for Monte-Carlo estimates, an auxiliary value function on the encoded
% states provides GAE advantages during training only --- it never updates
% $\mathbf{C}$ or the encoder and is discarded at deployment.

\subsubsection{Encoders}
Every encoder is built from a fixed random base map $\varphi$, drawn once
and never trained:
\begin{equation}
\begin{aligned}
  \varphi_{\mathrm{RFF}}(x)
  &= \sqrt{\frac{2}{D}}\,
     \cos(\mathbf{W}_{\mathrm{RFF}}x+\mathbf{b})
  && \text{(RFF)},\\
  \varphi_{\mathrm{FHRR}}(x)
  &= \sqrt{\frac{2}{D}}
     \begin{bmatrix}
       \cos(\mathbf{W}_{\mathrm{FHRR}}x)\\
       \sin(\mathbf{W}_{\mathrm{FHRR}}x)
     \end{bmatrix}
  && \text{(FHRR)},\\
  \varphi_{\mathrm{Basis}}(x)
  &= \rho\!\left(\mathbf{W}_{\mathrm{Basis}}x\right),
  \qquad
  \rho\in\{\operatorname{id},\operatorname{sign}\}
  && \text{(Basis)}.
\end{aligned}
\label{eq:base_maps}
\end{equation}
Here,
$\mathbf{W}_{\mathrm{RFF}},\mathbf{W}_{\mathrm{Basis}}
\in\mathbb{R}^{D\times d}$ and
$\mathbf{W}_{\mathrm{FHRR}}\in\mathbb{R}^{(D/2)\times d}$.
The entries of $\mathbf{W}_{\mathrm{RFF}}$ and
$\mathbf{W}_{\mathrm{FHRR}}$ are drawn independently from
$\mathcal{N}(0,\sigma^{-2})$, those of
$\mathbf{W}_{\mathrm{Basis}}$ from $\mathcal{N}(0,1)$, and
$b_i\sim\mathcal{U}(0,2\pi)$.
We define $\operatorname{id}(z)=z$.
The raw encoding $\widetilde{\boldsymbol{\phi}}(x)$ is either
$\varphi(x)$ applied directly or a superposition of bound
$\varphi$-encodings, followed by \eqref{eq:enc_norm}.
The FHRR map represents $e^{i\mathbf{W}x}$ through its real and
imaginary parts, giving unit norm and
$\varphi_{\mathrm{FHRR}}(x)^\top\varphi_{\mathrm{FHRR}}(y)=\frac{2}{D}\sum_{j=1}^{D/2}\cos\!\bigl(\mathbf{w}_j^\top(x-y)\bigr).$
After normalization, the Basis map approximates cosine similarity when
$\rho=\operatorname{id}$ and the angular kernel
$1-2\theta(x,y)/\pi$ when $\rho=\operatorname{sign}$, where
$\theta(x,y)=\arccos\!\left(\frac{x^\top y}{\|x\|_2\|y\|_2}\right).$

Normalized inner products
then concentrate around an encoder-specific similarity with
$\kappa(x,x)=1$, tightening as $D$
grows~\cite{rff,fhrr,thomas2021theoretical}: when
$\mathbb{E}\bigl[\tilde{\boldsymbol{\phi}}(x)^{\!\top}
\tilde{\boldsymbol{\phi}}(y)\bigr]=k(x,y)$, the normalization identifies
\begin{equation}
  \boldsymbol{\phi}(x)^{\!\top}\boldsymbol{\phi}(y)
  \;\approx\;
  \frac{k(x,y)}{\sqrt{k(x,x)\,k(y,y)}}
  \;=\;
  \kappa(x,y).
  \label{eq:cosine_kernel}
\end{equation}
The analysis below uses only
$\boldsymbol{\phi}(x_t)^{\!\top}\boldsymbol{\phi}(x)\approx\kappa(x_t,x)$;
the encoder determines $\kappa$, and hence how advantage evidence
generalizes across inputs, while the VSA dimensionality $D$ controls how
accurately the kernel is approximated, which the
dimensionality ablation probes directly.

{\paragraph{Computational and memory cost.}
Encoding costs one matrix--vector product, $O(Dd)$ for input dimension $d$, scoring costs $O(|\mathcal{A}|D)$, and one update costs $O(T|\mathcal{A}|D)$ for $\boldsymbol{\Lambda}^{\top}\mathbf{S}$ plus $O(T|\mathcal{A}|)$ for the softmax terms, with no optimizer state. Because $\mathbf{W}$ and $\mathbf{b}$ are random and never trained, deployment stores the $|\mathcal{A}|\times D$ action memories together with either the projection or the seed that regenerates it, and the bipolar variant stores one bit per memory coordinate.}
% ─────────────────────────────────────────────────────────────────────────────
\subsection{Theoretical Analysis of VSPG}
%\subsection{Projected PG and Kernel Memories}
\label{sec:theory}
% The actor raises two questions: what is the policy-gradient update for the
% policy \eqref{eq:policy}, and what do the trained action hypervectors
% store? Proposition \autoref{prop:equiv} answers the first --- the
% closed-form rule \eqref{eq:update} \emph{is} the softmax policy-gradient
% step, written as advantage-weighted bundling followed by row-wise sphere
% projection --- and Proposition \autoref{prop:kernel} the second: each
% trained hypervector is a compressed kernel expansion over experience.
% Throughout, let $\{(x_t, a_t, A_t)\}_{t=1}^{T}$ be a batch of transitions
% and define the empirical surrogate
% \begin{equation}
%   \hat{J}(\mathbf{C}) \;=\; \sum_{t=1}^{T} A_t \log \pi_{\mathbf{C}}(a_t \mid x_t),
%   \label{eq:surrogate}
% \end{equation}
% whose gradient is the sampled actor update induced by the chosen advantage
% estimates --- the REINFORCE estimator of \eqref{eq:pg} for unbiased
% return-minus-baseline
% advantages~\cite{sutton1998reinforcement}, and the corresponding sampled
% step for GAE, group-normalized, or clipped estimators. We first state the
% projection fact.
VSPG's actor design raises two questions: what is the policy-gradient update for the
policy \eqref{eq:policy}, and what do the trained action hypervectors
store? Proposition \autoref{prop:equiv} answers the first --- the
closed-form rule \eqref{eq:update} \emph{is} the softmax policy-gradient
step, written as advantage-weighted bundling followed by row-wise sphere
projection --- and Proposition \autoref{prop:kernel} the second: each
trained hypervector is a compressed kernel expansion over experience.
Throughout, let $\{(x_t, a_t, A_t)\}_{t=1}^{T}$ be a batch of transitions
and define the empirical surrogate
\begin{equation}
  \hat{J}(\mathbf{C})
  \;=\;
  \sum_{t=1}^{T}
  A_t \log \pi_{\mathbf{C}}(a_t \mid x_t),
  \label{eq:surrogate}
\end{equation}
where $\{A_t\}$ are treated as fixed weights when differentiating with
respect to $\mathbf{C}$. We first state the projection fact.
\begin{lemma}[Closest unit vector]
\label{lem:proj}
For any $\mathbf{v}\neq\mathbf{0}$, the unique unit vector closest to
$\mathbf{v}$ in Euclidean distance is $\mathbf{v}/\|\mathbf{v}\|_2$.
\end{lemma}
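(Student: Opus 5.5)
The plan is to expand the squared distance and reduce the problem to maximizing a linear functional over the unit sphere. Fix $\mathbf{v}\neq\mathbf{0}$ and let $\mathbf{u}$ be any unit vector, $\|\mathbf{u}\|_2=1$. Since the square root is monotone, minimizing the Euclidean distance is equivalent to minimizing its square:
\[
\|\mathbf{v}-\mathbf{u}\|_2^2=\|\mathbf{v}\|_2^2-2\,\mathbf{u}^{\top}\mathbf{v}+1 .
\]
The only term that depends on $\mathbf{u}$ is $-2\,\mathbf{u}^{\top}\mathbf{v}$. So the closest unit vector is exactly a maximizer of $\mathbf{u}^{\top}\mathbf{v}$ over the unit sphere.

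For existence and the optimal value, apply Cauchy--Schwarz: $\mathbf{u}^{\top}\mathbf{v}\le\|\mathbf{u}\|_2\|\mathbf{v}\|_2=\|\mathbf{v}\|_2$. The candidate $\mathbf{u}^\star=\mathbf{v}/\|\mathbf{v}\|_2$ is well defined because $\mathbf{v}\neq\mathbf{0}$. It has unit norm and attains $\mathbf{u}^{\star\top}\mathbf{v}=\|\mathbf{v}\|_2$. It is therefore a minimizer, with minimal distance $\|\mathbf{v}\|_2-1$ in absolute value, i.e.\ $\bigl|\|\mathbf{v}\|_2-1\bigr|$.

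Uniqueness is the only step that needs any care, and even it is routine. Suppose a unit vector $\mathbf{u}$ also attains $\mathbf{u}^{\top}\mathbf{v}=\|\mathbf{v}\|_2$. Then equality holds in Cauchy--Schwarz, so $\mathbf{u}=\lambda\mathbf{v}$ for some scalar $\lambda$. The equality $\mathbf{u}^{\top}\mathbf{v}=\lambda\|\mathbf{v}\|_2^2=\|\mathbf{v}\|_2>0$ forces $\lambda=1/\|\mathbf{v}\|_2>0$, so $\mathbf{u}=\mathbf{u}^\star$.

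Alternatively, uniqueness follows without the equality case of Cauchy--Schwarz. Write
\[
\|\mathbf{u}-\mathbf{u}^\star\|_2^2=2-2\,\mathbf{u}^{\top}\mathbf{u}^\star=2-2\,\mathbf{u}^{\top}\mathbf{v}/\|\mathbf{v}\|_2 ,
\]
and this vanishes precisely at the maximum. Either route gives the claim.

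The hypothesis $\mathbf{v}\neq\mathbf{0}$ is used in two places: to define $\mathbf{u}^\star$, and to make $\|\mathbf{v}\|_2>0$ so that the sign of $\lambda$ is determined. For $\mathbf{v}=\mathbf{0}$, every unit vector is equidistant from $\mathbf{v}$, and uniqueness fails.
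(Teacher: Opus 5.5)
Your proof is correct and matches the paper's argument: you expand $\|\mathbf{u}-\mathbf{v}\|_2^2$ on the unit sphere, reduce the problem to maximizing $\mathbf{u}^\top\mathbf{v}$, and use Cauchy--Schwarz. Your explicit equality-case argument, and the alternative via $\|\mathbf{u}-\mathbf{u}^\star\|_2^2$, just write out the uniqueness step that the paper asserts in one line.
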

\begin{proof}
On $\|\mathbf{u}\|_2=1$,
$\|\mathbf{u}-\mathbf{v}\|_2^2 = 1 + \|\mathbf{v}\|_2^2 - 2\,\mathbf{u}^\top\mathbf{v}$,
so minimizing distance maximizes $\mathbf{u}^\top\mathbf{v}$, which by
Cauchy--Schwarz occurs uniquely at $\mathbf{u}=\mathbf{v}/\|\mathbf{v}\|_2$.
\end{proof}
\begin{proposition}[The VSPG update is a projected policy-gradient step]
%\begin{proposition}[The policy gradient is bundling]
\label{prop:equiv}
Fix the encoder $\boldsymbol{\phi}$ and the policy \eqref{eq:policy}. For any
advantages $\{A_t\}$,
\begin{equation*}
  \nabla_{\mathbf{C}}\hat{J}(\mathbf{C})
  \;=\; \boldsymbol{\Lambda}^{\top}\mathbf{S} \;=\; \mathbf{G},
\end{equation*}
with $\boldsymbol{\Lambda}$ as in \eqref{eq:lambda}: the bundling term in
\eqref{eq:update} is exactly the sampled policy gradient, and
\eqref{eq:update} is a gradient-ascent step on $\hat{J}$ followed by
row-wise projection onto the unit sphere (Lemma~\ref{lem:proj}).
\end{proposition}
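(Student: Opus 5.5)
The plan is a direct computation of the gradient of the surrogate \eqref{eq:surrogate}, row by row in $\mathbf{C}$, followed by an appeal to Lemma~\ref{lem:proj} for the normalization step.

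\textbf{Step 1: gradient of one log-likelihood term.} Fix $t$ and write the logits as $\ell_b(x_t)=\tau\,\mathbf{c}_b^{\top}\mathbf{s}_t$. Then
\[
\log\pi_{\mathbf{C}}(a_t\mid x_t)=\ell_{a_t}(x_t)-\log\sum_{b}e^{\ell_b(x_t)}.
\]
Differentiate with respect to the row $\mathbf{c}_a$. Since $\mathbf{s}_t=\boldsymbol{\phi}(x_t)$ does not depend on $\mathbf{C}$ (the encoder is fixed), we have $\nabla_{\mathbf{c}_a}\ell_b=\tau\,\mathbf{1}[a=b]\,\mathbf{s}_t$. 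The standard softmax identity then gives
\[
\nabla_{\mathbf{c}_a}\log\pi_{\mathbf{C}}(a_t\mid x_t)=\tau\bigl(\mathbf{1}[a=a_t]-\pi(a\mid x_t)\bigr)\mathbf{s}_t .
\]
No unit-norm constraint enters here. The policy \eqref{eq:policy} is defined for arbitrary $\mathbf{C}$, so this is the unconstrained Euclidean gradient in the ambient space.

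\textbf{Step 2: sum over the batch.} Treat the $A_t$ as constants and use linearity:
\[
\nabla_{\mathbf{c}_a}\hat J=\sum_t \Lambda_{t,a}\,\mathbf{s}_t,
\]
with $\Lambda$ as in \eqref{eq:lambda}. Row $a$ of $\boldsymbol{\Lambda}^{\top}\mathbf{S}$ is exactly $\sum_t\Lambda_{t,a}\mathbf{s}_t^{\top}$, so stacking the rows gives $\nabla_{\mathbf{C}}\hat J=\boldsymbol{\Lambda}^{\top}\mathbf{S}=\mathbf{G}$.

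\textbf{Step 3: interpret the update.} The pre-normalization quantity $\mathbf{C}+\eta\mathbf{G}$ is therefore one gradient-ascent step on $\hat J$ with step size $\eta$. Applying Lemma~\ref{lem:proj} row-wise shows that $\operatorname{row\text{-}norm}$ maps each row to its unique closest unit vector, i.e.\ its Euclidean projection onto the sphere. The sphere product is a product set and the Frobenius distance splits across rows, so row-wise projection is the projection onto $(\mathbb{S}^{D-1})^{|\mathcal{A}|}$.

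\textbf{Main obstacle.} The calculus is routine; the delicate point is Lemma~\ref{lem:proj}'s hypothesis $\mathbf{v}\neq\mathbf{0}$. I would state that the interpretation holds whenever every row $\mathbf{c}_a+\eta\mathbf{g}_a$ is nonzero, noting that this is generic and is guaranteed for $\eta\|\mathbf{g}_a\|_2<1$ because $\|\mathbf{c}_a\|_2=1$. I would also remark that "gradient" means the ambient gradient rather than the Riemannian one, which is what makes the result a \emph{projected} gradient step.
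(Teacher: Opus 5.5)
Your proposal is correct and follows the paper's proof essentially step for step: row-wise log-sum-exp differentiation gives the softmax score $\tau\bigl(\mathbf{1}[a=a_t]-\pi(a\mid x_t)\bigr)\mathbf{s}_t$, the $A_t$-weighted batch sum matches $\boldsymbol{\Lambda}^{\top}\mathbf{S}$ entrywise, and Lemma~\ref{lem:proj} is applied row by row. Your two extra remarks are valid refinements that the paper leaves implicit: the Frobenius distance decouples across rows, so row-wise normalization is the projection onto the product of spheres, and every row stays nonzero whenever $\eta\|\mathbf{g}_a\|_2<1$.
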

\begin{proof}
Write $\log\pi(a_t\mid x_t)=\ell_{a_t}(x_t)-\log Z_t$ with
$Z_t=\sum_b e^{\ell_b(x_t)}$. A logit $\ell_b$ depends on row
$\mathbf{c}_a$ only when $b=a$, with
$\nabla_{\mathbf{c}_a}\ell_a=\tau\,\mathbf{s}_t$, and the log-sum-exp
derivative gives
$\nabla_{\mathbf{c}_a}\log Z_t=\pi(a\mid x_t)\,\tau\,\mathbf{s}_t$.
Subtracting yields the softmax score
\begin{equation}
  \nabla_{\mathbf{c}_a}\log\pi(a_t\mid x_t)
  = \tau\!\left(\mathbf{1}[a=a_t]-\pi(a\mid x_t)\right)\mathbf{s}_t:
  \label{eq:score}
\end{equation}
the encoded state is added to the taken action's row and subtracted from
every row in proportion to its current probability. Weighting each score
by $A_t$ and summing over the batch matches $\boldsymbol{\Lambda}$
entrywise, so
$\nabla_{\mathbf{C}}\hat{J}=\boldsymbol{\Lambda}^{\top}\mathbf{S}=\mathbf{G}$.
Row normalization rescales each row to unit length, which by
Lemma~\ref{lem:proj} projects $\mathbf{C}+\eta\mathbf{G}$ onto the
product of unit spheres
$\mathcal{M}=\{\mathbf{C} : \|\mathbf{c}_a\|_2 = 1 \;\forall a\}$.
\end{proof}

{Proposition~\autoref{prop:equiv} gives the exact projected policy-gradient update.
The following relates this projection to the intrinsic geometry of the unit sphere.

\begin{corollary}[First-order form of the normalized step]
\label{cor:riemann}
Let $\mathbf{g}_a=\nabla_{\mathbf{c}_a}\hat J(\mathbf{C})$, and let
$\mathbf{c}_a^+$ denote row $a$ after \eqref{eq:update}. If
$\eta\|\mathbf{g}_a\|_2\le 1/4$, then
\begin{equation}
\left\|
\mathbf{c}_a^+-\mathbf{c}_a
-\eta(\mathbf{I}-\mathbf{c}_a\mathbf{c}_a^\top)\mathbf{g}_a
\right\|_2
\le
3\eta^2\|\mathbf{g}_a\|_2^2.
\end{equation}
Thus, the normalized update is first-order equivalent to Riemannian gradient
ascent on the product of unit spheres. Moreover,
$\|\mathbf{g}_a\|_2\le\tau\sum_t|A_t|$, so the condition holds whenever
$\eta\tau\sum_t|A_t|\le1/4$.
\end{corollary}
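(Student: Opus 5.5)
Write $\mathbf{c}=\mathbf{c}_a$ (unit norm), $\mathbf{g}=\mathbf{g}_a$, $\epsilon=\eta\|\mathbf{g}\|_2\le 1/4$, and $\mathbf{v}=\mathbf{c}+\eta\mathbf{g}$, so that $\mathbf{c}_a^+=\mathbf{v}/\|\mathbf{v}\|_2$ by Proposition~\ref{prop:equiv}. The plan is to expand the normalization factor to first order. Then check that the leading correction cancels the radial component of $\eta\mathbf{g}$, leaving the tangential part $\eta(\mathbf{I}-\mathbf{c}\mathbf{c}^\top)\mathbf{g}$.

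Concretely, set $\alpha=\eta\,\mathbf{c}^\top\mathbf{g}$ and $\beta=\eta^2\|\mathbf{g}\|_2^2=\epsilon^2$. Then $\|\mathbf{v}\|_2^2=1+2\alpha+\beta$ with $|\alpha|\le\epsilon$, so $\|\mathbf{v}\|_2\ge 1-\epsilon\ge 3/4$ and $\mathbf{v}\neq\mathbf{0}$. Write $1/\|\mathbf{v}\|_2=1-\alpha+r$ and decompose
\[
\mathbf{c}_a^+-\mathbf{c}-\eta(\mathbf{I}-\mathbf{c}\mathbf{c}^\top)\mathbf{g}
=(1-\alpha+r)(\mathbf{c}+\eta\mathbf{g})-\mathbf{c}-\eta\mathbf{g}+\alpha\mathbf{c}
=-\alpha\eta\mathbf{g}+r(\mathbf{c}+\eta\mathbf{g}).
\]
The first term has norm at most $|\alpha|\,\epsilon\le\epsilon^2$. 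The second has norm at most $|r|(1+\epsilon)\le\tfrac54|r|$. It therefore suffices to show $|r|\le\tfrac85\epsilon^2$, i.e.\ a second-order Taylor bound for $f(x)=(1+x)^{-1/2}$ at $x=2\alpha+\beta$.

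The main obstacle is getting the constant in this scalar remainder, and I would handle it directly. Since $f(x)=1-x/2+R(x)$ with $|R(x)|\le\tfrac38\,x^2(1+\xi)^{-5/2}$ for some $\xi$ between $0$ and $x$, we have $r=-\beta/2+R(x)$. Here $|x|\le 2\epsilon+\epsilon^2\le\tfrac{9}{16}$, and $x\ge -2\epsilon\ge-\tfrac12$, so $(1+\xi)^{-5/2}\le 2^{5/2}$. This gives $|R|\le\tfrac38\cdot 2^{5/2}(2\epsilon+\epsilon^2)^2$. That is $O(\epsilon^2)$, but the constant is too loose. I would tighten it by using the bound $x\ge-2\epsilon+\epsilon^2$ together with $(2+\epsilon)^2\le 81/16$. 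If that still overshoots $3$ in total, I would bound $\|\mathbf{c}_a^+-\mathbf{v}/(1+\alpha)\|$ geometrically instead: $\mathbf{c}^+$ and $\mathbf{v}/(1+\alpha)$ are parallel, with lengths $1$ and $\|\mathbf{v}\|/(1+\alpha)=\sqrt{1+(\beta-\alpha^2)/(1+\alpha)^2}$. Their distance is then at most $(\beta-\alpha^2)/(2(1+\alpha)^2)\cdot\|\,\cdot\,\|$, which is $\le\tfrac{8}{9}\epsilon^2$-ish. Separately, $\mathbf{v}/(1+\alpha)-\mathbf{c}-\eta(\mathbf{I}-\mathbf{c}\mathbf{c}^\top)\mathbf{g}=\eta(\mathbf{g}-\alpha\mathbf{c}/\eta\cdot\ldots)$ simplifies to $-\tfrac{\alpha}{1+\alpha}\eta(\mathbf{I}-\mathbf{c}\mathbf{c}^\top)\mathbf{g}$, with norm $\le\tfrac43\epsilon^2$. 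Summing gives a constant below $3$. This two-step route through the radial rescaling $\mathbf{v}/(1+\alpha)$, whose component along $\mathbf{c}$ is exactly $\mathbf{c}$, is cleaner and is the one I would commit to.

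Finally, for the gradient bound, Proposition~\ref{prop:equiv} gives $\mathbf{g}_a=\sum_t A_t\tau(\mathbf{1}[a=a_t]-\pi(a\mid x_t))\mathbf{s}_t$. Here $|\mathbf{1}[a=a_t]-\pi(a\mid x_t)|\le1$ and $\|\mathbf{s}_t\|_2=1$ by \eqref{eq:enc_norm}, so the triangle inequality yields $\|\mathbf{g}_a\|_2\le\tau\sum_t|A_t|$. This immediately gives the sufficient condition $\eta\tau\sum_t|A_t|\le 1/4$. The Riemannian interpretation follows because $(\mathbf{I}-\mathbf{c}_a\mathbf{c}_a^\top)\mathbf{g}_a$ is the Riemannian gradient on the sphere.
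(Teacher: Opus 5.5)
Your committed route is correct and gives $\tfrac{20}{9}\epsilon^2<3\epsilon^2$, but it splits the error differently from the paper. Set $\mathbf{h}=(\mathbf{I}-\mathbf{c}\mathbf{c}^\top)\mathbf{g}$ and $N=\|\mathbf{v}\|_2$.

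The paper writes $\mathbf{c}_a^+=((1+\alpha)\mathbf{c}+\eta\mathbf{h})/N$. It decomposes $\mathbf{c}_a^+-\mathbf{c}-\eta\mathbf{h}$ into the orthogonal pieces $\bigl(\tfrac{1+\alpha}{N}-1\bigr)\mathbf{c}$ and $\bigl(\tfrac1N-1\bigr)\eta\mathbf{h}$. It bounds these by $\epsilon^2$ and $2\epsilon^2$ and combines them by Pythagoras into $\sqrt5\,\epsilon^2$.

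You instead pass through $\mathbf{v}/(1+\alpha)=\mathbf{c}+\eta\mathbf{h}/(1+\alpha)$, which is the central projection of $\mathbf{v}$ onto the affine tangent plane $\{\mathbf{x}:\mathbf{c}^\top\mathbf{x}=1\}$, and apply the triangle inequality. The first piece is a pure length error along the ray through $\mathbf{v}$, equal to $\sqrt{1+\delta}-1\le\delta/2\le\tfrac89\epsilon^2$ with the exact value $\delta=\eta^2\|\mathbf{h}\|_2^2/(1+\alpha)^2$. The second is the tangential term $-\tfrac{\alpha}{1+\alpha}\eta\mathbf{h}$, of norm at most $\tfrac43\epsilon^2$.

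What each buys:
\begin{itemize}
\item Your version makes every scalar estimate explicit. The paper states $|(1+\alpha)/N-1|\le\epsilon^2$ without derivation, whereas your only nonlinear estimate is concavity of $\sqrt{1+\delta}$.
\item Your version also cleanly separates the retraction error of normalization (sphere versus tangent plane) from the first-order tangent step.
\item The paper's orthogonal split gives a slightly tidier final combination.
\end{itemize}
Your gradient-norm bound and the Riemannian reading coincide with the paper's.

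For the write-up, drop the Taylor-remainder detour. Even with your tightened bounds it yields only $|r|\le 8\epsilon^2+\tfrac12\epsilon^2$, far above the $\tfrac85\epsilon^2$ you need, so it cannot reach the constant $3$. Also make these fixes:
\begin{itemize}
\item Replace the garbled intermediate line by the identity $\mathbf{v}=(1+\alpha)\mathbf{c}+\eta\mathbf{h}$.
\item State explicitly that $1+\alpha\ge\tfrac34>0$. This ensures $\mathbf{c}_a^+$ and $\mathbf{v}/(1+\alpha)$ point in the same direction, not merely along the same line, which your length comparison needs.
\item State explicitly that $\beta-\alpha^2=\eta^2\|\mathbf{h}\|_2^2\ge0$.
\end{itemize}
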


\begin{proof}
Write
$\mathbf{g}_a=s\mathbf{c}_a+\mathbf{h}$, where
$s=\mathbf{c}_a^\top\mathbf{g}_a$ and
$\mathbf{h}=(\mathbf{I}-\mathbf{c}_a\mathbf{c}_a^\top)\mathbf{g}_a
\perp\mathbf{c}_a$.
With
$N=\|\mathbf{c}_a+\eta\mathbf{g}_a\|_2$,
$
\mathbf{c}_a^+
=
\frac{(1+\eta s)\mathbf{c}_a+\eta\mathbf{h}}{N}.$
Let $u=\eta\|\mathbf{g}_a\|_2\le1/4$. Then $N\ge1-u\ge3/4$ and
$
\left|\frac{1+\eta s}{N}-1\right|\le u^2,
\qquad
\left|\frac1N-1\right|\le\frac{u}{1-u}\le2u.
$
Since the radial and tangential terms are orthogonal,
$
\left\|
\mathbf{c}_a^+-\mathbf{c}_a-\eta\mathbf{h}
\right\|_2
\le
\sqrt{u^4+4u^4}
<3u^2.
$
Finally,
$\|\mathbf{g}_a\|_2
\le\sum_t|\Lambda_{t,a}|\|\mathbf{s}_t\|_2
\le\tau\sum_t|A_t|$.
\end{proof}}

\begin{figure*}
    \centering
    \includegraphics[width=0.98\linewidth]{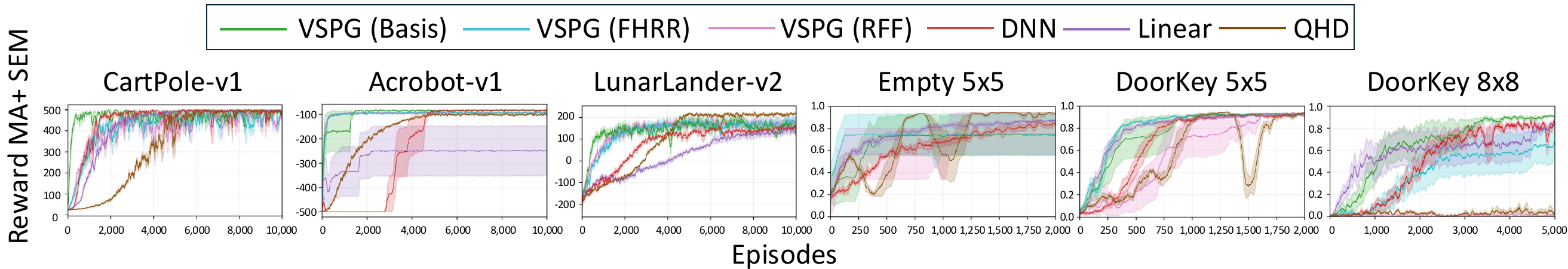}
    \caption{Overall performance analysis of VSPG compared with different baselines and encoders runs on 5 seeds. Except for \texttt{LunaLander-v2}, VSPG-based achieves the fastest convergence and competitive performance compared to baselines.}
    \label{fig:minigrid_control}
\end{figure*}
{Proposition~\ref{prop:equiv} places VSPG within standard policy-gradient theory: the actor is a log-linear policy over fixed hypervector features~\cite{sutton1999policy,mei2020global}, and under the unit-norm constraint its exact gradient step takes the form of an VSA bundling operation. Alternative advantage estimates or surrogate weights change only the entries of $\boldsymbol{\Lambda}$.}
%Informally, Proposition~\ref{prop:equiv} shows that the closed-form rule is exactly the policy gradient of a linear softmax actor over fixed VSA features, expressed as advantage-weighted bundling followed by projection, rather than a new optimizer. Alternative advantage estimates or surrogate weights change only the entries of $\boldsymbol{\Lambda}$. We use the closed form because it requires neither backpropagation nor optimizer state and exposes the structure of the learned action memories: repeated updates leave each action hypervector as a superposition of its initialization and encoded training inputs, whose inner product with a query yields a kernel expansion. Proposition~\ref{prop:kernel} formalizes this result.

\begin{proposition}[Exact expansion of trained action hypervectors]
\label{prop:kernel}
Run Algorithm \autoref{alg:vspg} for any number of updates from unit-norm
initialization $\{\mathbf{c}_a^{(0)}\}$, and let
$\{x_k\}_{k=1}^{N}$ collect all inputs visited during training. Then there
exist scalars $\beta_a > 0$ and $\{\alpha_{k,a}\}$ such that each trained
action hypervector is
\begin{equation}
  \mathbf{c}_a
  \;=\;
  \beta_a\,\mathbf{c}_a^{(0)}
  \;+\;
  \sum_{k=1}^{N} \alpha_{k,a}\,\boldsymbol{\phi}(x_k),
  \label{eq:span}
\end{equation}
and consequently the logit at any query input $x$ decomposes as
\begin{equation}
\begin{aligned}
  \ell_a(x)
=
  &\underbrace{\tau\beta_a\,\mathbf{c}_a^{(0)\top}\boldsymbol{\phi}(x)}_{
    \text{initialization; } O(1/\sqrt{D})}+
  \tau\!\sum_{k=1}^{N} \alpha_{k,a}\,
  \underbrace{\boldsymbol{\phi}(x_k)^{\!\top}\boldsymbol{\phi}(x)}_{
    \approx\, \kappa(x_k,\,x)}.
\end{aligned}
\label{eq:kernel_logit}
\end{equation}
%Each $\alpha_{k,a}$ carries the sign of the accumulated advantage-weighted
%softmax scores of input $x_k$ for action $a$, rescaled by the positive
%normalization factors.

{Writing $z_a^{(j)}>0$ for the Euclidean norm of row $a$ after the $j$-th
bundling step and before its renormalization, and $\Lambda^{(j)}$,
$\{x_t^{(j)}\}$ for the weights and inputs of the $j$-th batch, the
coefficients are
\begin{equation}
\beta_a=\prod_{j=1}^{J}\bigl(z_a^{(j)}\bigr)^{-1},
\alpha_{k,a}
=\eta\!\!\sum_{(j,t)\,:\,x_t^{(j)}=x_k}\!\!\Lambda_{t,a}^{(j)}
\prod_{i=j}^{J}\bigl(z_a^{(i)}\bigr)^{-1}.
\label{eq:coeffs}
\end{equation}
Each $\alpha_{k,a}$ is therefore a positively weighted sum of the
advantage-weighted softmax scores collected at the visits of $x_k$; in
particular, if $x_k$ is visited once at step $t$, then
$\operatorname{sign}(\alpha_{k,a})
=\operatorname{sign}\bigl(A_t(\mathbf{1}[a=a_t]-\pi(a\mid x_t))\bigr)$.}
\end{proposition}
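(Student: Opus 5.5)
The plan is an induction on the number of updates $J$ performed by Algorithm~\ref{alg:vspg}, tracking the representation \eqref{eq:span} together with the explicit coefficients \eqref{eq:coeffs}. By Proposition~\ref{prop:equiv}, the $j$-th update of row $a$ reads
\[
\mathbf{c}_a^{(j)}=\frac{1}{z_a^{(j)}}\Bigl(\mathbf{c}_a^{(j-1)}+\eta\sum_{t}\Lambda^{(j)}_{t,a}\,\boldsymbol{\phi}(x^{(j)}_t)\Bigr),
\]
where $z_a^{(j)}=\bigl\|\mathbf{c}_a^{(j-1)}+\eta\sum_t\Lambda^{(j)}_{t,a}\boldsymbol{\phi}(x_t^{(j)})\bigr\|_2$.

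The first thing to check is that $z_a^{(j)}>0$, since the statement presupposes it. This is exactly the condition under which row normalization is well defined; Lemma~\ref{lem:proj} also needs $\mathbf{v}\neq\mathbf{0}$. I will take it as the standing assumption under which the algorithm runs. If a sufficient condition is wanted, the step-size condition $\eta\tau\sum_t|A_t|\le 1/4$ of Corollary~\ref{cor:riemann} gives $z_a^{(j)}\ge 3/4$.

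The base case $J=0$ holds with $\beta_a=1$ and all $\alpha_{k,a}=0$. For the inductive step, assume row $a$ after $j-1$ updates is
\[
\mathbf{c}_a^{(j-1)}=\beta_a^{(j-1)}\mathbf{c}_a^{(0)}+\sum_k\alpha_{k,a}^{(j-1)}\boldsymbol{\phi}(x_k),
\]
with $\beta_a^{(j-1)}=\prod_{i<j}(z_a^{(i)})^{-1}$. Substituting this into the update and dividing by $z_a^{(j)}$ multiplies every existing coefficient by $(z_a^{(j)})^{-1}$. It also appends new terms $\eta\Lambda^{(j)}_{t,a}(z_a^{(j)})^{-1}$ attached to the inputs of batch $j$. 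Unrolling to $J$ gives $\beta_a=\prod_{j=1}^J(z_a^{(j)})^{-1}>0$. The term from batch $j$ picks up the factors $(z_a^{(i)})^{-1}$ for every $i$ from $j$ through $J$, i.e.\ $\prod_{i=j}^{J}(z_a^{(i)})^{-1}$.

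The bookkeeping step, which I expect to be the only real care point, is to group contributions by distinct input $x_k$. Any pair $(j,t)$ with $x_t^{(j)}=x_k$ contributes to $\alpha_{k,a}$, which yields exactly \eqref{eq:coeffs}. Since all product factors are positive, $\alpha_{k,a}$ is a positively weighted sum of the $\Lambda^{(j)}_{t,a}$. In particular, for a single visit, $\operatorname{sign}(\alpha_{k,a})=\operatorname{sign}(\Lambda_{t,a})$. Because $\tau>0$, this equals $\operatorname{sign}\bigl(A_t(\mathbf{1}[a=a_t]-\pi(a\mid x_t))\bigr)$.

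The logit decomposition \eqref{eq:kernel_logit} then follows by taking the inner product of \eqref{eq:span} with $\tau\boldsymbol{\phi}(x)$ using linearity. The annotation $\boldsymbol{\phi}(x_k)^\top\boldsymbol{\phi}(x)\approx\kappa(x_k,x)$ is \eqref{eq:cosine_kernel}. For the $O(1/\sqrt D)$ label, I will note that $\mathbf{c}_a^{(0)}$ is an isotropic Gaussian normalized to the sphere, hence uniform, and independent of $x$. Lemma~\ref{lemma:quasi} then gives $|\mathbf{c}_a^{(0)\top}\boldsymbol{\phi}(x)|=O(\sqrt{\ln(1/\delta)/D})$ with probability $1-\delta$ for a fixed query $x$.

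Two caveats limit this initialization claim. First, it is probabilistic. Second, the bound for the initialization term as a whole carries an extra factor $\beta_a$; this is harmless when $\beta_a\le 1$, and otherwise the bound is proportionally larger. I would state both as a remark rather than as part of the exact identity.
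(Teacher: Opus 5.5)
Your proposal is correct and follows essentially the same route as the paper: induction on the number of updates, unrolling the normalized recursion $\mathbf{c}_a^{(j)}=\bigl(\mathbf{c}_a^{(j-1)}+\eta\sum_t\Lambda^{(j)}_{t,a}\mathbf{s}^{(j)}_t\bigr)/z_a^{(j)}$ and grouping by distinct inputs to obtain \eqref{eq:coeffs}, using positivity of every $z_a^{(i)}$ for the sign claim, and taking inner products with $\tau\boldsymbol{\phi}(x)$ to get \eqref{eq:kernel_logit}. Your extra remarks are accurate refinements that the paper's proof leaves implicit: $z_a^{(j)}>0$ is a standing assumption, guaranteed for instance when $\eta\tau\sum_t|A_t|\le 1/4$, and the $O(1/\sqrt{D})$ label is only a high-probability statement for a fixed query that also carries the factor $\tau\beta_a$.
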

\begin{proof}
By induction. At initialization \eqref{eq:span} holds with $\beta_a=1$,
$\alpha_{k,a}=0$. Each update \eqref{eq:update} adds
$\eta\sum_t \Lambda_{t,a}\,\mathbf{s}_t$, which lies in the span of
encoded visited inputs, and row normalization rescales the row by a
positive scalar, preserving the form of \eqref{eq:span} and the
coefficient signs. 
{Unrolling the recursion
$\mathbf{c}_a^{(j)}=\bigl(\mathbf{c}_a^{(j-1)}+\eta\sum_t\Lambda_{t,a}^{(j)}\mathbf{s}_t^{(j)}\bigr)/z_a^{(j)}$
yields the coefficients in \eqref{eq:coeffs}, whose weights are positive
because every $z_a^{(i)}$ is positive. Inner products of \eqref{eq:span}
with $\tau\boldsymbol{\phi}(x)$ give \eqref{eq:kernel_logit}.}
%
%Inner products of \eqref{eq:span} with
% $\tau\boldsymbol{\phi}(x)$ give \eqref{eq:kernel_logit}. For the brackets:
% $\mathbf{c}_a^{(0)\top}\boldsymbol{\phi}(x)$ is the cosine of two
% independent high-dimensional directions and concentrates at $O(1/\sqrt{D})$
% (\autoref{lemma:quasi}); and
% $\boldsymbol{\phi}(x_k)^{\!\top}\boldsymbol{\phi}(x)$ concentrates around
% $\kappa(x_k,x)$ --- uniformly over compact domains at rate
% $O\bigl(\sqrt{\log(1/\delta)/D}\bigr)$ for random-feature
% encoders~\cite{rff,thomas2021theoretical}, and pointwise at $O(1/\sqrt{D})$
% for projection-based and symbolic encoders by coordinate-wise
% concentration.
\end{proof}

% {\color{red}\begin{corollary}[Deployment error of the kernel view]
% \label{cor:kernel_err}
% Fix a run of Algorithm~\ref{alg:vspg} with resulting coefficients
% $(\beta_a,\{\alpha_{k,a}\})$ from \eqref{eq:span}, and let
% $\mathcal{X}$ be a compact input domain. Define
% $\varepsilon_D=\sup_{x,x'\in\mathcal{X}}
% |\boldsymbol{\phi}(x)^{\top}\boldsymbol{\phi}(x')-\kappa(x,x')|$
% and
% $\varepsilon'_D=\sup_{x\in\mathcal{X}}
% |\mathbf{c}_a^{(0)\top}\boldsymbol{\phi}(x)|$.
% Then for every query $x\in\mathcal{X}$,
% $
% \Bigl|\ell_a(x)-\tau\sum_{k=1}^{N}\alpha_{k,a}\,\kappa(x_k,x)\Bigr|
% \;\le\;
% \tau\,\beta_a\,\varepsilon'_D
% +\tau\,\|\boldsymbol{\alpha}_{\cdot,a}\|_1\,\varepsilon_D ,
% $
% by the triangle inequality applied to \eqref{eq:kernel_logit}. For
% random-feature encoders,
% $\varepsilon_D=O\bigl(\sqrt{\log(1/\delta)/D}\bigr)$ with probability at
% least $1-\delta$ over the encoder draw~\cite{rff,thomas2021theoretical};
% for projection-based and symbolic encoders the same quantity concentrates
% pointwise at rate $O(1/\sqrt{D})$ by coordinate-wise concentration; and
% $\varepsilon'_D$ obeys the random-feature rate for random unit-norm
% initialization by Lemma~\ref{lemma:quasi} combined with a union bound over
% a cover of $\mathcal{X}$.
% \end{corollary}}

Informally, up to a vanishing initialization bias, the deployed VSPG
policy is a softmax over \emph{advantage-weighted kernel scores against
experience}:
\begin{equation}
  \pi(a \mid x)
  \;\approx\;
  \operatorname{softmax}_a\!\Bigl(
    \tau \sum_{k=1}^{N} \alpha_{k,a}\, \kappa(x_k, x)
  \Bigr).
  \label{eq:kernel_policy}
\end{equation}

An advantageous transition contributes positive mass to action $a$ at
encoder-similar inputs, while evidence favoring competing actions contributes
negative mass. This kernel sharing allows VSPG to reuse each transition across
a neighborhood of observations, providing a representation-level mechanism
for sample-efficient learning. Generalization is therefore governed by the
encoder-induced similarity, while the $N$-term expansion remains superposed
in $D$ fixed coordinates and is never enumerated at inference.
% \textcolor{red}{
% This kernel sharing provides a mechanism for sample-efficient learning:
% each advantageous transition can affect the policy over a neighborhood of
% similar observations rather than only at the visited input. Thus,
% Proposition~\ref{prop:kernel} explains how VSPG can reuse experience across
% states, while the resulting learning-speed advantage is evaluated empirically
% in \autoref{fig:minigrid_control}.
% }
% Inputs that earned positive advantage under action $a$ contribute positive
% mass to $\ell_a$ at all similar inputs in proportion to $\kappa(x_k,x)$;
% inputs where competing actions were reinforced contribute negative mass.
% Generalization is thus governed by the encoder's similarity, while the
% $N$-term expansion stays superposed in $D$ fixed coordinates and is never
% enumerated at inference.

\begin{remark}[Memory readout as the sharp-kernel limit]
\label{rem:memory}
At a revisited input $x=x_m$, \eqref{eq:kernel_logit} splits into the self
term $\alpha_{m,a}$ (by unit norm) and cross terms
$\sum_{k\neq m}\alpha_{k,a}\,\kappa(x_k,x_m)$. When dissimilar inputs are
nearly orthogonal under $\boldsymbol{\phi}$, the cross terms vanish and
$\ell_a(x_m)\approx\tau\,\alpha_{m,a}$: the associative-memory retrieval
underlying VSA classification~\cite{kanerva2009hyperdimensional,onlinehd}. 
% In general, the cross terms are not noise but kernel evaluations that transfer advantage evidence to similar inputs; the encoder's similarity sets the trade-off between retrieval fidelity and generalization, and larger $D$ tightens the kernel approximation.
In general, the cross terms are not noise but kernel evaluations that transfer advantage evidence to similar inputs; the encoder’s similarity determines how this evidence is shared across inputs, and larger $D$ tightens the kernel approximation.
\end{remark}

The same update applies independently to each agent in multi-agent training, using agent-specific advantages that may be estimated by a centralized critic during training. At deployment, only the fixed encoder and action memories are retained, and actions are selected by $a^{*}=\arg\max_a\mathbf{c}_a^{\top}\boldsymbol{\phi}(x).$
By \eqref{eq:kernel_policy}, this fixed-size readout evaluates the
compressed kernel memory without storing or enumerating the training
samples.

% Finally, VSPG is an actor parameterization, not a full RL stack: the same
% update applies unchanged per agent in multi-agent training with
% agent-specific advantages (e.g., MAPPO-style GAE from a centralized critic
% used only during training), and deployment is always a fixed encoder
% followed by one action-memory readout
% $a^{*}=\arg\max_a \mathbf{c}_a^{\top}\boldsymbol{\phi}(x)$ --- by
% \eqref{eq:kernel_policy}, a constant-depth kernel lookup with no learned
% layers between encoding and logits.

{
\subsection{Robustness of the Stored Policy}
\label{sec:robust_theory}

For bipolar action memories, independent sign flips admit a direct
stability guarantee for the greedy readout.

\begin{proposition}[Bit-flip stability of the greedy readout]
\label{prop:bitflip}
Let the deployed actor store bipolar action memories
$\mathbf{c}_a\in\{-1/\sqrt{D},+1/\sqrt{D}\}^{D}$ and select actions by
$a^{*}(x)=\arg\max_{a}\mathbf{c}_a^{\top}\boldsymbol{\phi}(x)$ with
$\|\boldsymbol{\phi}(x)\|_2=1$. Suppose each stored coordinate flips sign
independently with probability $p<1/2$, giving corrupted memories
$\tilde{\mathbf{c}}_a$. Fix an input $x$ and let
$\Delta(x)=\mathbf{c}_{a^{*}}^{\top}\boldsymbol{\phi}(x)-\max_{b\neq a^{*}}\mathbf{c}_b^{\top}\boldsymbol{\phi}(x)>0$
denote the similarity margin. Then
% \begin{equation}
% \begin{aligned}
% \Pr\Bigl[\arg\max_{a}\tilde{\mathbf{c}}_a^{\top}\boldsymbol{\phi}(x)\neq a^{*}(x)\Bigr]
% \;\le\;
% 2\,|\mathcal{A}|\,\exp\!\Bigl(-\tfrac{D\,(1-2p)^{2}\,\Delta(x)^{2}}{8}\Bigr).
% \end{aligned}    
% \end{equation}
{\small
\begin{equation}
\Pr\Bigl[
\arg\max_{a}\tilde{\mathbf{c}}_a^{\top}\boldsymbol{\phi}(x)
\neq a^{*}(x)
\Bigr]
\le
2\,|\mathcal{A}|\,
\exp\!\Bigl(
-\tfrac{D(1-2p)^2\Delta(x)^2}{8}
\Bigr).
\end{equation}
}
\end{proposition}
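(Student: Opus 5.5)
The plan is a mean-plus-concentration argument for each corrupted score, followed by a union bound over actions. Write the corruption as $\tilde c_{a,i}=\varepsilon_{a,i}c_{a,i}$, where the $\varepsilon_{a,i}\in\{-1,+1\}$ are independent with $\Pr[\varepsilon_{a,i}=-1]=p$. Then the corrupted score is
\[
\tilde{\mathbf{c}}_a^{\top}\boldsymbol{\phi}(x)=\sum_{i=1}^{D}\varepsilon_{a,i}\,c_{a,i}\phi_i(x),
\]
a sum of independent bounded terms. Since $\mathbb{E}[\varepsilon_{a,i}]=1-2p$, its mean is $(1-2p)\,\mathbf{c}_a^{\top}\boldsymbol{\phi}(x)$. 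The corruption therefore shrinks every score by the common factor $1-2p>0$. In expectation the ordering is preserved, and the gap between $a^{*}$ and any competitor $b$ is at least $(1-2p)\Delta(x)$.

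\textbf{Concentration step.} I would apply Hoeffding's inequality to each score separately. The $i$-th summand lies in an interval of length $2|c_{a,i}\phi_i(x)|=2|\phi_i(x)|/\sqrt{D}$, because every bipolar coordinate has magnitude $1/\sqrt{D}$. The sum of squared ranges is
\[
\sum_i 4\phi_i(x)^2/D=4/D,
\]
using only $\|\boldsymbol{\phi}(x)\|_2=1$. This is the step where care is needed: the bound must rely on the unit norm of the query, not on any coordinate-wise bound on $\boldsymbol{\phi}$. Hoeffding then gives, for each action $a$ and each $t>0$,
\[
\Pr\bigl[|\tilde{\mathbf{c}}_a^{\top}\boldsymbol{\phi}(x)-(1-2p)\mathbf{c}_a^{\top}\boldsymbol{\phi}(x)|\ge t\bigr]\le 2\exp(-Dt^2/2).
\]

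\textbf{Reduction and union bound.} Set $t=(1-2p)\Delta(x)/2$. Suppose the corrupted argmax differs from $a^{*}$, counting ties as failures. Then some $b\neq a^{*}$ has $\tilde{\mathbf{c}}_b^{\top}\boldsymbol{\phi}\ge\tilde{\mathbf{c}}_{a^{*}}^{\top}\boldsymbol{\phi}$. Because the expected gap is at least $2t$, this forces one of two events. Either the score of $a^{*}$ falls at least $t$ below its mean, or the score of $b$ rises at least $t$ above its mean. Every failure is thus contained in the union, over all $|\mathcal{A}|$ actions, of the events that some score deviates by at least $t$ from its mean. Applying the tail bound above to each action and summing gives
\[
2|\mathcal{A}|\exp\bigl(-D(1-2p)^2\Delta(x)^2/8\bigr),
\]
which is the claimed bound. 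One-sided tails would even remove the factor $2$, so the stated constant has slack.

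No step should be a genuine obstacle. The points to get right are the range computation, which must use $\|\boldsymbol{\phi}(x)\|_2=1$, and the reduction of the argmax failure event to a single per-action deviation threshold of $t$.
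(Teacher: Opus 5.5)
Your proof is correct and matches the paper's argument step for step. Both write the corrupted score as $(1-2p)\,\mathbf{c}_a^{\top}\boldsymbol{\phi}(x)$ plus a zero-mean sum, apply Hoeffding with squared ranges summing to $4/D$ via $\|\boldsymbol{\phi}(x)\|_2=1$, set the threshold $t=(1-2p)\Delta(x)/2$, and take a union bound over the $|\mathcal{A}|$ actions; your spelled-out reduction of an argmax failure to a single score deviating by at least $t$, and your remark that one-sided tails would drop the factor $2$, are both valid.
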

\begin{proof}
Write $\tilde{c}_{a,j}=\sigma_{a,j}\,c_{a,j}$ with independent
$\sigma_{a,j}\in\{-1,+1\}$ and $\Pr[\sigma_{a,j}=-1]=p$, so
$\mathbb{E}[\sigma_{a,j}]=1-2p$. Then
$\tilde{\mathbf{c}}_a^{\top}\boldsymbol{\phi}(x)=(1-2p)\,\mathbf{c}_a^{\top}\boldsymbol{\phi}(x)+\zeta_a$
with $\zeta_a=\sum_{j}\bigl(\sigma_{a,j}-(1-2p)\bigr)c_{a,j}\phi_j(x)$, a sum
of independent zero-mean terms whose $j$-th term has range
$2|c_{a,j}\phi_j(x)|$. Since
$\sum_j\bigl(2|c_{a,j}\phi_j(x)|\bigr)^{2}=\tfrac{4}{D}\sum_j\phi_j(x)^{2}=\tfrac{4}{D}$,
Hoeffding's inequality gives $\Pr[|\zeta_a|\ge t]\le 2\exp(-Dt^{2}/2)$ for
every $a$. Scaling all similarities by $1-2p>0$ preserves the maximizer, so
the corrupted argmax can change only if $|\zeta_a|\ge(1-2p)\Delta(x)/2$ for
some $a$. A union bound over the $|\mathcal{A}|$ actions completes the
proof.
\end{proof}

\begin{remark}[Beyond bipolar memories]
\label{rem:bitflip}
For bipolar memories, random sign flips primarily scale the clean logits by
$(1-2p)$, with a residual perturbation that concentrates as $D$ grows.
Multi-bit quantized memories do not follow the same sign-flip model, but
independent bounded coordinate errors are likewise averaged by
low-coherence hypervectors. We therefore evaluate their robustness
empirically rather than claim the same bound.
\end{remark}

% \begin{remark}[Effect on the stochastic policy]
% \label{rem:bitflip}
% With probability at least
% $1-2|\mathcal{A}|e^{-Dt^2/2}$, all corrupted logits equal the clean logits
% scaled by $(1-2p)$ up to an additive perturbation of at most $\tau t$.
% Thus, random sign flips primarily reduce the effective temperature, with
% residual error decreasing as
% $O\!\left(\tau\sqrt{\log(|\mathcal{A}|)/D}\right)$.
% \end{remark}
}

\section{Experiments}

\label{sec:experiments}

% ============================================================
% Insert in the Experiments section
% ============================================================

\subsection{Experimental Setup and Implementation Details}

\label{sec:experimental_setup}
To evaluate VSPG and the empirical implications of our theory, we use
MiniGrid~\cite{minigrid}, classic control, and SustainGym building
control~\cite{yeh2023sustaingym}. The first two test sample efficiency and
final performance, while SustainGym tests continuous, noisy, delayed, and
multi-agent control. Tuning budgets are comparable and otherwise favor the
baselines; full details are in the supplementary material. Experiments use
either one RTX 4090 or CPUs, and all results are reproducible on CPUs alone. Unless otherwise noted, all methods use discount factor $\gamma = 0.99$, and VSPG uses dimensionality $D = 10{,}000$ for MiniGrid and classic control and $D = 5{,}000$ for SustainGym.

% \subsection{Experimental Setup and Implementation Details}
% \label{sec:experimental_setup}
% We evaluate VSPG on three groups of discrete-action reinforcement-learning
% tasks: MiniGrid~\cite{minigrid}, classic control, and SustainGym~\cite{yeh2023sustaingym} building control. MiniGrid and classic
% control evaluate sample efficiency and final task performance on standard
% single-agent benchmarks. Building control serves a different role: it tests
% VSPG in a more realistic physical-control setting with continuous observations,
% heterogeneous feature scales, exogenous weather variables, delayed dynamics,
% multi-agent coordination, and sensor-like observation noise.

\subsection{MiniGrid and Classic Control}
\label{sec:exp_minigrid_classic}

We first evaluate VSPG on standard single-agent discrete-action benchmarks:
classic-control environments from Gymnasium~\cite{gymnasium} and navigation
tasks from MiniGrid~\cite{minigrid}. These experiments assess sample
efficiency, final performance, and comparison with policy-gradient and prior
VSA-based RL baselines.
Following QHD~\cite{qhd}, the classic-control suite includes
\texttt{CartPole-v1}, \texttt{LunarLander-v2}, and
\texttt{Acrobot-v1}, providing a direct comparison with prior VSA
reinforcement learning on low-dimensional continuous-observation tasks.
We additionally evaluate \texttt{Empty-5x5}, \texttt{DoorKey-5x5}, and
\texttt{DoorKey-8x8}. These MiniGrid tasks introduce partial observability,
sparse rewards, and longer-horizon credit assignment: \texttt{Empty-5x5}
primarily tests rapid goal reaching, whereas the \texttt{DoorKey} tasks
require coordinated key pickup, door unlocking, and navigation to the goal.
Exact encoder constructions, observation preprocessing, and task-specific
configurations are provided in the supplementary material.

\begin{figure*}[t]
\centering

\begin{minipage}[t]{0.49\linewidth}
    \centering
    \includegraphics[width=\linewidth]{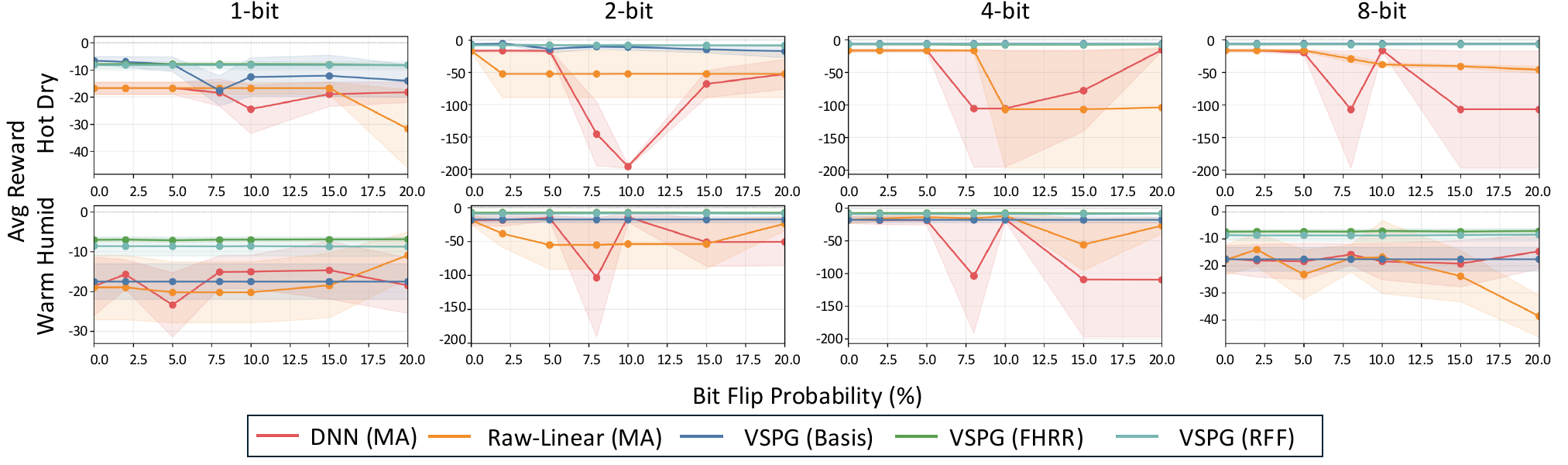}
    \caption{Bit-flip robustness on SustainGym after quantization. VSPG, DNN, andRaw-Linear actors are evaluated.}
    % \caption{Bit-flip robustness on SustainGym under post-training quantization. VSPG variants are compared with DNN and Raw-Linear actors across both climates and training pipelines; only stored actor parameters are corrupted.}
    \label{fig:bitflip}
\end{minipage}
\hfill
\begin{minipage}[t]{0.49\linewidth}
    \centering
    \includegraphics[width=\linewidth]{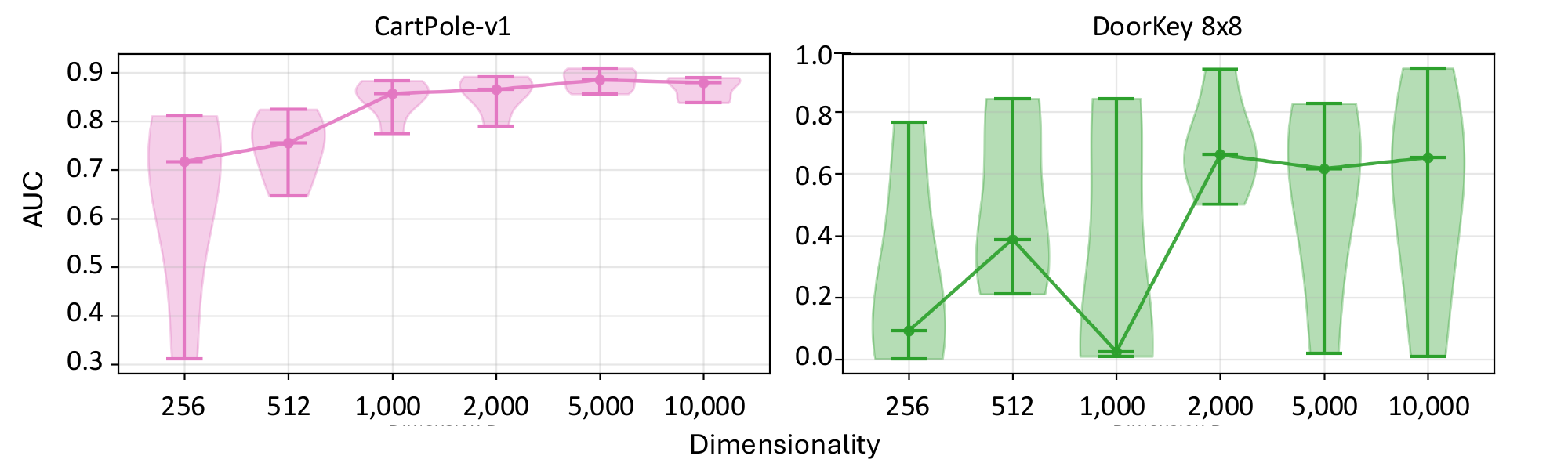}
    \caption{Dimensionality analysis on \texttt{CartPole-v1} and \texttt{DoorKey 8x8}. }
    \label{fig:dim}
\end{minipage}

\end{figure*}

\begin{figure*}[th]
    \centering
    \includegraphics[width=\linewidth]{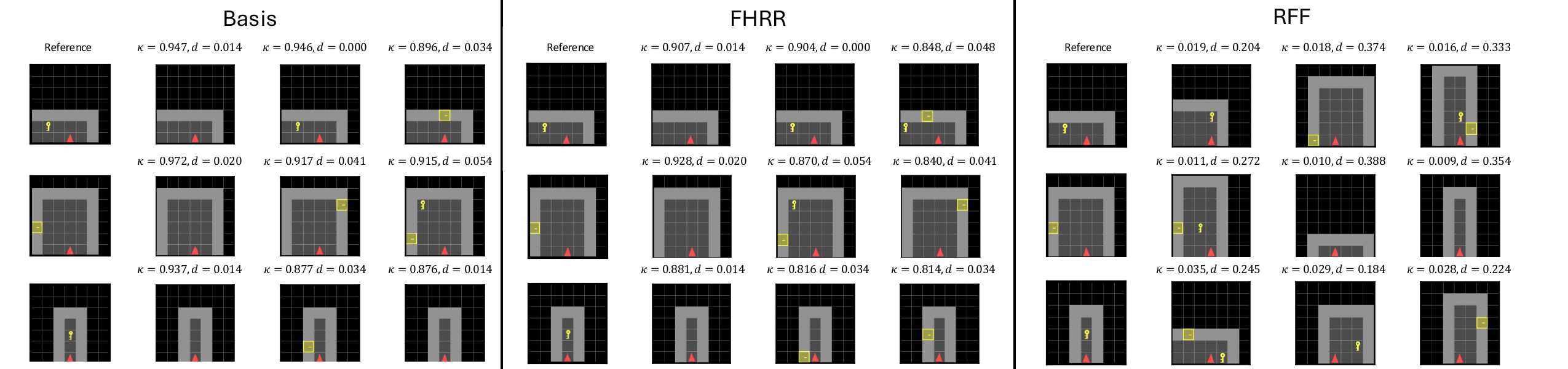}
    \caption{Encoder-induced kernel neighborhoods from trained VSPG trajectories on \texttt{DoorKey-$8\times8$}.
    % , annotated with kernel similarity $\kappa$ and normalized observation distance $d$.
    }
    % \caption{
    % Representative encoder-induced kernel neighborhoods among observations collected by trained VSPG policies on DoorKey-$8\times8$. Each retrieved observation is annotated with its kernel similarity $\kappa$ and normalized observation distance $d$ from the reference. 
    % % Bipolar and FHRR retrieve coherent semantic neighbors, whereas RFF does not.
    % }
    \label{fig:kernel_qualitative}
\end{figure*}

We compare VSPG against three baselines. \textbf{Raw-Linear} tests whether the
gains come from VSA encoding rather than from the linear action-hypervector
policy alone. The \textbf{DNN} baseline is a two-hidden-layer neural policy
trained with backpropagation under the same advantage-estimation protocol where
applicable, allowing us to compare sample efficiency against a standard neural
policy. We also include \textbf{QHD} as the closest prior VSA baseline for
single-agent discrete-action RL. Since QHD is value-based and off-policy,
whereas VSPG is policy-based and on-policy, we match the total episode budget
rather than the number of parameter updates. MiniGrid experiments use
$2{,}000$ episodes for $5\times5$ grids and $5{,}000$ episodes for the
$8\times8$ grid, while classic-control experiments use $10{,}000$ episodes.
All VSPG and policy-gradient baselines are trained with REINFORCE except for
\texttt{Acrobot-v1}, where all policy-gradient methods use GAE with the same
PPO-style clipped importance weighting~\cite{ppo}, since sparse rewards often
keep vanilla REINFORCE near the minimum return of $-500$. 
We report the mean and standard error over five seeds, using the mean reward over the final 100 episodes of each run as the main summary metric.

\noindent\textbf{Results.}
\autoref{fig:minigrid_control} shows that VSPG learns substantially faster than
the DNN and linear actors on classic control, especially on
\texttt{CartPole-v1} and \texttt{Acrobot-v1}. On \texttt{LunarLander-v2},
VSPG improves faster early in training, while all methods reach broadly
comparable final performance. VSPG also remains competitive across MiniGrid,
including the more difficult and variable \texttt{DoorKey-8x8}. QHD is
competitive on simpler classic-control tasks, but deteriorates as task
complexity and partial observability increase: it becomes unstable and
collapses on \texttt{DoorKey-5x5}, and learns almost nothing on
\texttt{DoorKey-8x8}. Overall, these results are consistent with the
kernel-sharing mechanism in Proposition~\ref{prop:kernel}, through which
advantage evidence is reused across encoder-similar observations, supporting
sample-efficient learning in both fully and partially observable tasks without
a neural actor.

% \noindent\textbf{Results.}
% \autoref{fig:minigrid_control} summarizes the learning curves across classic
% control and MiniGrid environments. On \texttt{CartPole-v1} and
% \texttt{Acrobot-v1}, VSPG variants converge substantially faster than the DNN and
% linear baselines, showing that the proposed vector-symbolic policy update can
% learn effective discrete-action policies without a neural actor. On
% \texttt{LunarLander-v2}, the DNN baseline eventually reaches comparable or
% slightly stronger performance, but VSPG still improves rapidly in the early
% stage of training. In MiniGrid, VSPG also exhibits competitive performance on
% \texttt{Empty-5x5}, \texttt{DoorKey-5x5}, and \texttt{DoorKey-8x8}, although the
% larger \texttt{DoorKey-8x8} task remains more challenging and shows higher
% variance across seeds. Overall, these results indicate that VSPG is particularly
% effective in terms of sample efficiency, while retaining competitive final
% performance across both low-dimensional control and symbolic navigation tasks.
\subsection{Building Control and Bit-flip Robustness}
\begin{table}[t]
\centering
\caption{Average reward per step (mean $\pm$ std) on SustainGym building
control after 500 training episodes.}
\label{tab:sustaingym_results}

\setlength{\tabcolsep}{3pt}
\renewcommand{\arraystretch}{0.95}
\footnotesize

\resizebox{\linewidth}{!}{%
\begin{tabular}{lcc}
\toprule
Method & Hot-Dry & Warm-Humid \\
\midrule
DNN (Multi-Agent)        & $-13.41 \pm 0.26$ & $-12.12 \pm 2.79$ \\
% DNN (Single)             & $-17.81 \pm 5.12$ & $-19.58 \pm 9.42$ \\
Raw-Linear (Multi-Agent) & $-13.02 \pm 0.22$ & $-10.92 \pm 0.13$ \\
% Raw-Linear (Single)      & $-16.57 \pm 7.09$ & $-11.95 \pm 2.05$ \\
\midrule
VSPG (Basis)           & $-40.36 \pm 43.21$ & $-79.65 \pm 80.99$ \\
VSPG (FHRR)              & $-7.89 \pm 1.41$ & $\mathbf{-7.11 \pm 0.81}$ \\
VSPG (RFF)               & $\mathbf{-7.28 \pm 0.64}$ & $-7.94 \pm 0.12$ \\
\bottomrule
\end{tabular}%
}
\end{table}
% \begin{table}[t]
% \caption{Average reward per step (mean $\pm$ std) on SustainGym building control after 500 training episodes.}
% \centering
% \begin{minipage}{0.98\linewidth}
% \centering
% \vspace{-2mm}
% \label{tab:sustaingym_results}
% \small
% \begin{tabular}{lcc}
% \toprule
% Method & Hot-Dry & Warm-Humid \\
% \midrule
% DNN (Multi-Agent)         & $-13.41 \pm 0.26$ & $-12.12 \pm 2.79$ \\
% DNN (Single)        & $-17.81 \pm 5.12$ & $-19.58 \pm 9.42$ \\
% Raw-Linear (Multi-Agent)  & $-13.02 \pm 0.22$ & $-10.92 \pm 0.13$ \\
% Raw-Linear (Single) & $-16.57 \pm 7.09$ & $-11.95 \pm 2.05$ \\
% \midrule
% VSPG (Bipolar)      & $-40.36 \pm 43.21$ & $-79.65 \pm 80.99$ \\
% VSPG (FHRR)         & $-7.89 \pm 1.41$ & $\mathbf{-7.11 \pm 0.81}$ \\
% VSPG (RFF)          & $\mathbf{-7.28 \pm 0.64}$ & $-7.94 \pm 0.12$ \\
% \bottomrule
% \end{tabular}
% \vspace{-5mm}
% \end{minipage}
% \end{table}

We evaluate VSPG on SustainGym~\cite{yeh2023sustaingym}, a
multi-agent building-control benchmark with noisy, non-stationary physical observations. We compare against DNN and Raw-Linear actors trained under the same decentralized-actor, centralized-critic multi-agent pipeline. The centralized critic is used only during training to compute GAE advantages and
is discarded at deployment. QHD is omitted because it is a single-agent, value-based method outside our actor-focused multi-agent comparison. All checkpoints are trained for 500 episodes before evaluation. Following~\cite{yun2026loghd,neuralhd}, we post-training quantize each actor's parameters---the action-hypervector matrix $\mathbf{C}$ for VSPG, and all actor weights and biases for DNN and Raw-Linear---to $b\in\{1,2,4,8\}$-bit signed integers using per-tensor min--max affine quantization. We then flip each stored bit independently with probability $p$ and dequantize the corrupted parameters back to float32 before evaluation.

% We evaluate VSPG on SustainGym~\cite{yeh2023sustaingym}, a multi-agent building-control benchmark with noisy, non-stationary physical observations, comparing against Raw-Linear, Linear-VSPG, and a discrete MA-PPO baseline under the same multi-agent formulation; QHD is omitted as a single-agent value-based method outside our actor-focused comparison. All checkpoints are trained for 500 episodes before evaluation. Following ~\cite{yun2026loghd, neuralhd}, we post-training quantize each actor's parameters -- the action-hypervector matrix $\mathbf{C}$ for VSPG, all weight/bias for DNN and Raw-Linear -- to $b \in \{1,2,4,8\}$-bit signed integers via per-tensor min-max affine quantization, then flip each bit independently with probability $p$ and dequantize back to float32 before evaluation.
\subsubsection{Results}
\autoref{tab:sustaingym_results} shows that FHRR- and RFF-VSPG achieve the strongest returns across both climates, whereas Basis-VSPG performs poorly and exhibits high variance. Together with the failure of RFF-VSPG on \texttt{DoorKey-8x8}, this indicates that no encoder is uniformly effective and that VSPG inherits the inductive bias of its encoder-induced similarity. Under post-training corruption, however, \autoref{fig:bitflip} shows that VSPG action memories generally degrade more gracefully than DNN and Raw-Linear actors. Proposition~\ref{prop:bitflip} complements these results for genuinely bipolar memories under direct sign flips, while the affine-quantized real-valued memories are evaluated empirically.
\subsection{Ablation Studies}
We examine two representation-level implications of
Proposition~\ref{prop:kernel}: dimensionality controls how faithfully the
kernel expansion is compressed, while the encoder determines the neighborhood
over which advantage evidence is shared.
\subsubsection{Dimensionality and policy memory.}
\autoref{fig:dim} reports the normalized area under the learning curve from
100-episode-smoothed returns or success rates. On \texttt{CartPole-v1},
performance improves with $D$ and largely saturates beyond $D=1{,}000$.
\texttt{DoorKey-8x8} is more sensitive, but higher dimensions generally
improve learning despite variation across seeds. This trend is consistent
with \eqref{eq:cosine_kernel}: increasing $D$ improves the approximation of
$\kappa(x,x')$ and reduces interference in the compressed policy memory.
\subsubsection{Encoder-Induced Kernel Neighborhoods.}
For trained \texttt{DoorKey-8$\times$8} policies, we retrieve the top-3
neighbors of sampled observations under the fixed encoder-induced similarity
(\autoref{fig:kernel_qualitative}). Basis and FHRR produce coherent
neighborhoods with high kernel similarity, whereas the unsuccessful RFF
configuration yields weaker and less consistent matches. Together with
\autoref{fig:minigrid_control}, these ablations illustrate the mechanism in
Proposition~\ref{prop:kernel}: $D$ governs interference in the compressed
kernel memory, while the encoder determines whether stored advantage evidence
is transferred to observations where it supports useful generalization,
accounting for both the strong and failed configurations.

\vspace{-3mm}
\section{Conclusion}
\vspace{-1mm}
In this paper, we introduced VSPG, a vector-symbolic formulation of
discrete-action policy gradients. Its softmax update becomes
advantage hypervector bundling, while action memories form
fixed-size kernel expansions over experience. For bipolar memories, greedy
action selection is provably robust to random bit flips, and experiments show
favorable sample efficiency, competitive returns, and graceful degradation
under quantization and memory corruption. Future work will test whether these properties extend to vision-driven decision making and robotics~\cite{mnih2013playing,caron2021emerging}.
% In this paper, we introduced VSPG, a vector-symbolic formulation of
% discrete-action policy gradients. We showed that its update corresponds to
% the policy gradient of a linear categorical actor over fixed HDC features,
% expressed as advantage-weighted hypervector bundling, and that the resulting
% action hypervectors can be interpreted as fixed-size kernel memories.
% We further proved that the greedy readout from bipolar action memories tolerates random bit flips with failure probability decaying exponentially in the dimension, and experiments demonstrated favorable sample efficiency, competitive final performance, and small degradation under quantization and bit-flip corruption.
% An
% important next question is whether these advantages extend to
% vision-driven autonomous systems, from classic Atari
% benchmarks~\cite{mnih2013playing} to UAV navigation and control, where the
% policy must operate under visual uncertainty and may depend on a learned
% perception backbone~\cite{caron2021emerging}. Understanding how much of
% VSPG's efficiency and robustness survives this transition remains an open
% direction.

\normalsize
\bibliography{aaai2027}

\newpage
\onecolumn
\appendix
\vspace{-3mm}
\section{Baseline Actor Architectures}
\vspace{-3mm}
\label{sec:supp-baselines}

\textbf{DNN.}
A two-hidden-layer MLP with ReLU activations maps observations to
$|\mathcal{A}|$ action logits, followed by a softmax. The hidden widths
are tuned as described in \autoref{sec:supp-hparam}. For classic
control and MiniGrid, DNN uses the same REINFORCE advantages and Adam
optimizer as VSPG.
On SustainGym, DNN (Multi-Agent) uses decentralized actors and a shared
centralized critic. Each agent maintains its own actor, while a
two-hidden-layer MLP critic (default width 128) maps the concatenated
global state to a scalar value estimate. Training uses GAE
($\lambda{=}0.95$), a PPO-style clipped objective
($\epsilon{=}0.2$), and online normalization of critic targets. The
critic is trained on-policy and discarded at evaluation. Multi-agent
VSPG and Raw-Linear use the same critic, advantages, and clipped
importance-ratio objective, isolating the actor representation rather
than the training procedure. DNN (Single) instead uses one actor and
critic over the global state, with the actor producing a joint
discretized action for all zones under the same training loop.

\textbf{Raw-Linear.}
A single linear layer maps the raw observation directly to
$|\mathcal{A}|$ action logits, followed by a softmax, without hidden
layers or HDC encoding. Since VSPG is also linear in its encoded
features,
$\mathbf{C}\boldsymbol{\phi}(x)$, this baseline isolates the effect of
the hyperdimensional encoding from that of the linear policy. All
remaining training settings, including the SustainGym Multi-Agent and
Single variants, match DNN.

\textbf{QHD.}
QHD~\citep{qhd} is the closest prior HDC baseline for single-agent,
discrete-action RL. It learns a linear hyperdimensional $Q$-function,
$Q(s,a)=M[a]^\top s$, using semi-gradient Q-learning:
\begin{equation}
\label{eq:qhd-update}
Q(s,a) \leftarrow Q(s,a)
+ \beta\left[r+\gamma\max_{a'}Q^{-}(s',a')-Q(s,a)\right]
\end{equation} 
where $Q^{-}$ is a target copy hard-synchronized at the interval reported as \texttt{target} in \autoref{tab:hparam-budget}. For the linear HDC representation, \autoref{eq:qhd-update} becomes $M[a] \mathrel{+}=\beta\left(q_{\mathrm{true}}-q_{\mathrm{pred}}\right)s,$ with $q_{\mathrm{true}}=r+\gamma\max_{a'}Q^{-}(s',a')$ and $q_{\mathrm{pred}}=M[a]^\top s$. Actions are selected $\epsilon$-greedily with linearly decayed exploration. Because QHD is off-policy whereas VSPG is on-policy, we match episode budgets rather than update counts. QHD is omitted from SustainGym because it is a single-agent value-based method and does not fit the actor-focused multi-agent comparison.

\section{Encoder Implementation Details Across Environments}
\label{sec:supp-encoders}

The main paper defines a fixed base map $\varphi_e$ for each encoder family
$e\in\{\mathrm{Basis},\mathrm{FHRR},\mathrm{RFF}\}$. Each
environment-specific encoder is obtained either by applying $\varphi_e$
directly to a preprocessed observation or by composing multiple
$\varphi_e$-encoded components through binding and bundling.

Throughout this section, $\odot$ denotes binding and $\oplus$ denotes
bundling. These symbols refer to the corresponding VSA operations rather than
to one shared scalar operation. For the bipolar Basis encoder, $\odot$ is
element-wise multiplication and $\oplus$ is element-wise addition. For FHRR,
$\odot$ is element-wise complex multiplication, equivalently phase addition,
and $\oplus$ is element-wise complex addition. Every resulting real-valued
hypervector is finally normalized to unit Euclidean norm before being passed
to the actor.

\subsection{MiniGrid: Compositional Encoding over Grid Cells}
\label{sec:supp-encoders-minigrid}

A MiniGrid observation consists of a $7\times7\times3$ partial-view image,
containing an object, color, and state identifier for each cell, together with
the agent direction. Let $\mathcal{V}(x)$ denote the cells marked as visible
in observation $x$, and let $(o_{uv},k_{uv},q_{uv})$ denote the object, color,
and state identifiers at cell $(u,v)$. For the Basis and FHRR encoders, the
raw observation representation has the common compositional form
\begin{equation}
\widetilde{\boldsymbol{\phi}}_{e}(x)
=
\bigoplus_{(u,v)\in\mathcal{V}(x)}
\left[
\varphi_{e}^{\mathrm{pos}}(u,v)
\odot
\varphi_{e}^{\mathrm{obj}}(o_{uv})
\odot
\varphi_{e}^{\mathrm{col}}(k_{uv})
\odot
\varphi_{e}^{\mathrm{st}}(q_{uv})
\right]
\oplus
\varphi_{e}^{\mathrm{dir}}(d),
\qquad
e\in\{\mathrm{Basis},\mathrm{FHRR}\},
\label{eq:supp-minigrid-compositional}
\end{equation}
where $d$ is the current direction. Thus, each cell is represented by binding
its position and categorical components, and the visible-cell representations
are bundled with the direction representation.

\paragraph{Basis.}
MiniGrid uses the sign-thresholded Basis map, so each categorical component is
assigned a fixed bipolar hypervector. Position is represented by binding
independently drawn row and column encodings,
\begin{equation}
\varphi_{\mathrm{Basis}}^{\mathrm{pos}}(u,v)
=
\varphi_{\mathrm{Basis}}^{x}(e_u)
\odot
\varphi_{\mathrm{Basis}}^{y}(e_v),
\label{eq:supp-minigrid-basis-position}
\end{equation}
where $e_u$ and $e_v$ are one-hot identifiers. Object, color, state, and
direction identifiers are encoded in the same way using independent fixed
codebooks. In \autoref{eq:supp-minigrid-compositional}, binding is the
Hadamard product and bundling is vector addition. The resulting vector is
then normalized to unit Euclidean norm.

\paragraph{FHRR.}
For FHRR, the composition in
\autoref{eq:supp-minigrid-compositional} is implemented using unit complex
hypervectors. Let $D_c=D/2$ and define
$\operatorname{cis}(\boldsymbol{\theta})
=\cos(\boldsymbol{\theta})+i\sin(\boldsymbol{\theta})$.
The position encoder uses two Gaussian base-phase vectors,
\begin{equation}
\boldsymbol{\psi}_x,\boldsymbol{\psi}_y
\sim
\mathcal{N}\!\left(
\mathbf{0},
w^{-2}\mathbf{I}_{D_c}
\right),
\qquad
\varphi_{\mathrm{FHRR}}^{\mathrm{pos}}(u,v)
=
\operatorname{cis}\!\left(
u\boldsymbol{\psi}_x+v\boldsymbol{\psi}_y
\right),
\label{eq:supp-minigrid-fhrr-position}
\end{equation}
where $w$ controls the spatial kernel width. Object, color, state, and
direction identifiers use independent fixed phase codebooks,
\begin{equation}
\boldsymbol{\omega}_{r}^{(k)}
\sim
\mathcal{U}(-\pi,\pi)^{D_c},
\qquad
\varphi_{\mathrm{FHRR}}^{r}(k)
=
\operatorname{cis}\!\left(
\boldsymbol{\omega}_{r}^{(k)}
\right),
\quad
r\in\{\mathrm{obj},\mathrm{col},\mathrm{st},\mathrm{dir}\}.
\label{eq:supp-minigrid-fhrr-symbols}
\end{equation}

The representation of visible cell $(u,v)$ is therefore
\begin{equation}
\begin{aligned}
\mathbf{z}_{uv}
&=
\varphi_{\mathrm{FHRR}}^{\mathrm{pos}}(u,v)
\odot
\varphi_{\mathrm{FHRR}}^{\mathrm{obj}}(o_{uv})
\odot
\varphi_{\mathrm{FHRR}}^{\mathrm{col}}(k_{uv})
\odot
\varphi_{\mathrm{FHRR}}^{\mathrm{st}}(q_{uv})
\\
&=
\operatorname{cis}\!\left(
u\boldsymbol{\psi}_x
+
v\boldsymbol{\psi}_y
+
\boldsymbol{\omega}_{\mathrm{obj}}^{(o_{uv})}
+
\boldsymbol{\omega}_{\mathrm{col}}^{(k_{uv})}
+
\boldsymbol{\omega}_{\mathrm{st}}^{(q_{uv})}
\right).
\end{aligned}
\label{eq:supp-minigrid-fhrr-cell}
\end{equation}
Thus, FHRR binding is implemented by element-wise complex multiplication,
which is equivalent to adding the component phases.

The visible-cell hypervectors are bundled by complex addition, after which
the direction hypervector is added:
\begin{equation}
\mathbf{z}(x)
=
\bigoplus_{(u,v)\in\mathcal{V}(x)}
\mathbf{z}_{uv}
\oplus
\varphi_{\mathrm{FHRR}}^{\mathrm{dir}}(d).
\label{eq:supp-minigrid-fhrr-bundle}
\end{equation}
Following the implementation, each complex coordinate is then projected back
to unit magnitude,
\begin{equation}
\widehat{z}_j(x)
=
\frac{z_j(x)}
{\max\{|z_j(x)|,\varepsilon\}},
\qquad
j=1,\ldots,D_c,
\label{eq:supp-minigrid-fhrr-coordinate-normalization}
\end{equation}
before the real and imaginary parts are interleaved into
$\widetilde{\boldsymbol{\phi}}_{\mathrm{FHRR}}(x)\in\mathbb{R}^{D}$.
The interleaving is only a fixed coordinate permutation of the block
real--imaginary representation and therefore preserves inner products. The
real-valued output is finally normalized to unit Euclidean norm.

The Gaussian position phases in
\autoref{eq:supp-minigrid-fhrr-position} induce a smooth spatial kernel. For
a displacement $(\Delta_u,\Delta_v)$,
\begin{equation}
\mathbb{E}\!\left[
\frac{1}{D_c}
\sum_{j=1}^{D_c}
\cos\!\left(
\Delta_u\psi_{x,j}
+
\Delta_v\psi_{y,j}
\right)
\right]
=
\exp\!\left(
-\frac{\Delta_u^2+\Delta_v^2}{2w^2}
\right).
\label{eq:supp-minigrid-fhrr-kernel}
\end{equation}
Hence, $w$ controls the spatial neighborhood over which observations share
representation similarity. Smaller values produce more nearly orthogonal
positions, whereas larger values produce broader spatial generalization. We
use $w=1.0$ unless otherwise stated, giving expected similarity
$\exp(-1/2)\approx0.61$ between horizontally or vertically adjacent cells.

\paragraph{RFF.}
RFF is applied directly rather than compositionally:
\begin{equation}
\widetilde{\boldsymbol{\phi}}_{\mathrm{RFF}}(x)
=
\varphi_{\mathrm{RFF}}\!\left(
T_{\mathrm{MG}}(x)
\right),
\label{eq:supp-minigrid-rff}
\end{equation}
where $T_{\mathrm{MG}}$ flattens the observation into a
$7\times7\times3+1=148$ dimensional vector and rescales the object, color,
state, and direction channels by $10$, $5$, $2$, and $3$, respectively. This
prevents the raw categorical identifier ranges from determining the projection
scale. The output of \autoref{eq:supp-minigrid-rff} is then normalized to unit
Euclidean norm.

\subsection{Classic Control: Direct Encoding}
\label{sec:supp-encoders-classic}

The classic-control environments provide flat continuous observations:
$d=4$ for \texttt{CartPole-v1}, $d=6$ for \texttt{Acrobot-v1}, and
$d=8$ for \texttt{LunarLander-v2}. Since these observations contain no
explicit compositional structure, each encoder is applied directly:
\begin{equation}
\widetilde{\boldsymbol{\phi}}_{e}(x)
=
\varphi_e(x),
\qquad
e\in
\{\mathrm{Basis},\mathrm{FHRR},\mathrm{RFF}\}.
\label{eq:supp-classic-direct-encoding}
\end{equation}
Basis uses the identity nonlinearity,
$\varphi_{\mathrm{Basis}}(x)=\mathbf{W}x$, while FHRR and RFF use their
corresponding fixed random maps from the main paper. The native observation
ranges are already comparable, so no additional range normalization is applied
before \autoref{eq:supp-classic-direct-encoding}. Every encoded observation is
subsequently normalized to unit Euclidean norm.

\subsection{SustainGym: Range-Normalized Direct Encoding}
\label{sec:supp-encoders-sustaingym}

SustainGym provides a $d=10$ flat observation whose components have
substantially different physical scales. We first apply component-wise range
normalization,
\begin{equation}
T_{\mathrm{SG}}(x)
=
2
\frac{x-\mathrm{lo}}
{\mathrm{hi}-\mathrm{lo}}
-1
\in[-1,1]^d,
\label{eq:supp-sustaingym-range-normalization}
\end{equation}
and then apply the corresponding base map directly:
\begin{equation}
\widetilde{\boldsymbol{\phi}}_{e}(x)
=
\varphi_e\!\left(
T_{\mathrm{SG}}(x)
\right),
\qquad
e\in
\{\mathrm{Basis},\mathrm{FHRR},\mathrm{RFF}\}.
\label{eq:supp-sustaingym-direct-encoding}
\end{equation}
The normalization in
\autoref{eq:supp-sustaingym-range-normalization} prevents large-range
quantities such as solar heat gain from dominating the random projection.
FHRR and RFF use their corresponding fixed maps, while Basis uses
$\rho=\operatorname{sign}$. The output of
\autoref{eq:supp-sustaingym-direct-encoding} is finally normalized to unit
Euclidean norm.

\section{Hyperparameter Tuning Budget}
\label{sec:supp-hparam}

\autoref{tab:hparam-budget} reports the search space, number of evaluated
configurations, and selected hyperparameters for each method and environment.
For VSPG, $\tau$ and $\eta$ are tuned jointly because $\tau$ affects both the
logit scale and the effective update magnitude. The baselines receive at least
comparable tuning budgets: DNN and Raw-Linear use similarly sized searches,
while QHD is evaluated over a substantially larger grid. Each selected
configuration is then evaluated over five seeds for the results reported in the
main paper.

\begingroup
\fontsize{7pt}{6pt}\selectfont
\renewcommand{\arraystretch}{1.15}

\setlength{\LTleft}{0pt}
\setlength{\LTright}{0pt}
\setlength{\LTcapwidth}{\textwidth}

\begin{longtable}{
  @{}
  >{\raggedright\arraybackslash}p{.13\textwidth}
  @{\hspace{.015\textwidth}}
  >{\raggedright\arraybackslash}p{.14\textwidth}
  @{\hspace{.015\textwidth}}
  >{\raggedright\arraybackslash}p{.32\textwidth}
  @{\hspace{.015\textwidth}}
  >{\centering\arraybackslash}p{.05\textwidth}
  @{\hspace{.015\textwidth}}
  >{\raggedright\arraybackslash}p{.30\textwidth}
  @{}}
\caption{Hyperparameter search spaces, tuning budgets, and selected
configurations. Each row reports one method--environment configuration
(and encoder for VSPG). Repeated settings are shown so each row is
self-contained; fixed settings are not counted in the \textit{Pts.} column.}
\label{tab:hparam-budget} \\
\toprule
\textbf{Method} & \textbf{Environment} &
\textbf{Search/fixed settings} & \textbf{Pts.} &
\textbf{Chosen best} \\
\midrule
\endfirsthead
\toprule
\textbf{Method} & \textbf{Environment} &
\textbf{Search/fixed settings} & \textbf{Pts.} &
\textbf{Chosen best} \\
\midrule
\endhead
\midrule \multicolumn{5}{r}{\textit{continued on next page}} \\
\endfoot
\bottomrule
\endlastfoot

\multicolumn{5}{@{}l}{\textit{MiniGrid}} \\
VSPG (Basis) & \texttt{Empty-5x5} &
$\tau \in \{1,2,5,10\}$, $\eta \in \{10^{-3},5{\times}10^{-3},10^{-2},5{\times}10^{-2}\}$ &
16 & $\tau{=}10,\eta{=}5{\times}10^{-3}$ \\
VSPG (Basis) & \texttt{DoorKey-5x5} &
$\tau \in \{1,2,5,10\}$, $\eta \in \{10^{-3},5{\times}10^{-3},10^{-2},5{\times}10^{-2}\}$ &
16 & $\tau{=}10,\eta{=}10^{-3}$ \\
VSPG (Basis) & \texttt{DoorKey-8x8} &
$\tau \in \{1,2,5,10\}$, $\eta \in \{10^{-3},5{\times}10^{-3},10^{-2},5{\times}10^{-2}\}$ &
16 & $\tau{=}10,\eta{=}10^{-3}$ \\
VSPG (FHRR) & \texttt{Empty-5x5} &
$\tau \in \{1,2,5,10\}$, $\eta \in \{10^{-3},5{\times}10^{-3},10^{-2},5{\times}10^{-2}\}$, $w{=}1.0$ &
16 & $\tau{=}10,\eta{=}10^{-2}$ \\
VSPG (FHRR) & \texttt{DoorKey-5x5} &
$\tau \in \{1,2,5,10\}$, $\eta \in \{10^{-3},5{\times}10^{-3},10^{-2},5{\times}10^{-2}\}$, $w{=}1.0$ &
16 & $\tau{=}10,\eta{=}10^{-3}$ \\
VSPG (FHRR) & \texttt{DoorKey-8x8} &
$\tau \in \{1,2,5,10\}$, $\eta \in \{10^{-3},5{\times}10^{-3},10^{-2},5{\times}10^{-2}\}$, $w{=}1.0$ &
16 & $\tau{=}10,\eta{=}10^{-3}$ \\
VSPG (RFF) & \texttt{Empty-5x5} &
$\tau \in \{5,10,20\}$, $\eta \in \{5{\times}10^{-3},10^{-2},5{\times}10^{-2}\}$, $\sigma \in \{0.5,1.0\}$ &
18 & $\tau{=}20,\eta{=}10^{-2},\sigma{=}1.0$ \\
VSPG (RFF) & \texttt{DoorKey-5x5} &
$\tau \in \{5,10,20\}$, $\eta \in \{5{\times}10^{-3},10^{-2},5{\times}10^{-2}\}$, $\sigma \in \{0.5,1.0\}$ &
18 & $\tau{=}10,\eta{=}5{\times}10^{-3},\sigma{=}1.0$ \\
VSPG (RFF) & \texttt{DoorKey-8x8} &
$\tau \in \{1,2,5,10\}$, $\eta \in \{10^{-2},5{\times}10^{-2}\}$, $\sigma \in \{0.5,1.0\}$  &
16 & $\tau{=}5,\eta{=}5{\times}10^{-2},\sigma{=}0.5$ \\
DNN & \texttt{Empty-5x5} &
lr $\in \{10^{-4},3{\times}10^{-4},10^{-3},3{\times}10^{-3}\}$, hidden $\in \{[64,32],[128,64],[256,128],[256,256]\}$ &
16 & lr${=}10^{-4}$, hidden${=}[128,64]$ \\
DNN & \texttt{DoorKey-5x5} &
lr $\in \{10^{-4},3{\times}10^{-4},10^{-3},3{\times}10^{-3}\}$, hidden $\in \{[64,32],[128,64],[256,128],[256,256]\}$ &
16 & lr${=}3{\times}10^{-4}$, hidden${=}[256,128]$ \\
DNN & \texttt{DoorKey-8x8} &
lr $\in \{10^{-4},3{\times}10^{-4},10^{-3},3{\times}10^{-3}\}$, hidden $\in \{[64,32],[128,64],[256,128],[256,256]\}$ &
16 & lr${=}3{\times}10^{-4}$, hidden${=}[256,128]$ \\
Raw-Linear & \texttt{Empty-5x5} &
lr $\in \{10^{-4},3{\times}10^{-4},10^{-3},3{\times}10^{-3}\}$, $\tau \in \{0.5,1.0,2.0,5.0\}$ &
16 & lr${=}10^{-3}$, $\tau{=}0.5$ \\
Raw-Linear & \texttt{DoorKey-5x5} &
lr $\in \{10^{-4},3{\times}10^{-4},10^{-3},3{\times}10^{-3}\}$, $\tau \in \{0.5,1.0,2.0,5.0\}$ &
16 & lr${=}10^{-3}$, $\tau{=}5.0$ \\
Raw-Linear & \texttt{DoorKey-8x8} &
lr $\in \{10^{-4},3{\times}10^{-4},10^{-3},3{\times}10^{-3}\}$, $\tau \in \{0.5,1.0,2.0,5.0\}$ &
16 & lr${=}10^{-3}$, $\tau{=}5.0$ \\

\multicolumn{5}{@{}l}{\textit{Classic control}} \\

VSPG (FHRR) & \texttt{CartPole-v1} &
$\tau \in \{10,20,40\}$, $\eta \in \{10^{-6},10^{-5},10^{-4},10^{-3}\}$, advantage${=}$REINFORCE  &
12 & $\tau{=}40,\eta{=}10^{-5}$ \\
VSPG (FHRR) & \texttt{LunarLander-v2} &
$\tau \in \{10,20,40\}$, $\eta \in \{10^{-6},10^{-5},10^{-4},10^{-3}\}$, advantage${=}$REINFORCE  &
12 & $\tau{=}40,\eta{=}10^{-5}$ \\
VSPG (FHRR) & \texttt{Acrobot-v1} &
$\tau \in \{5,7,10,20\}$, $\eta \in \{5{\times}10^{-4},10^{-4},10^{-3}\}$, advantage${=}$GAE+PPO-clip  &
12 & $\tau{=}10,\eta{=}10^{-3}$ \\

VSPG (Basis) & \texttt{CartPole-v1} &
$\tau \in \{10,20,40\}$, $\eta \in \{10^{-6},10^{-5},10^{-4},10^{-3}\}$, advantage${=}$REINFORCE  &
12 & $\tau{=}40,\eta{=}10^{-5}$ \\
VSPG (Basis) & \texttt{LunarLander-v2} &
$\tau \in \{10,20,40\}$, $\eta \in \{10^{-6},10^{-5},10^{-4},10^{-3}\}$, advantage${=}$REINFORCE  &
12 & $\tau{=}40,\eta{=}10^{-5}$ \\
VSPG (Basis) & \texttt{Acrobot-v1} &
$\tau \in \{5,7,10,20\}$, $\eta \in \{5{\times}10^{-4},10^{-4},10^{-3}\}$, advantage${=}$GAE+PPO-clip  &
12 & $\tau{=}10,\eta{=}10^{-3}$  \\

VSPG (RFF) & \texttt{CartPole-v1} &
$\tau \in \{10,20,40\}$, $\eta \in \{10^{-6},10^{-5},10^{-4},10^{-3}\}$, advantage${=}$REINFORCE  &
12 & $\tau{=}40,\eta{=}10^{-5}$ \\

VSPG (RFF) & \texttt{LunarLander-v2} &
$\tau \in \{10,20,40\}$, $\eta \in \{10^{-6},10^{-5},10^{-4},10^{-3}\}$, advantage${=}$REINFORCE  &
12 & $\tau{=}40,\eta{=}10^{-5}$ \\

VSPG (RFF) & \texttt{Acrobot-v1} &
$\tau \in \{5,7,10,20\}$, $\eta \in \{5{\times}10^{-4},10^{-4},10^{-3}\}$, advantage${=}$GAE+PPO-clip  &
12 & $\tau{=}10,\eta{=}10^{-3}$ \\

DNN & \texttt{CartPole-v1} &
lr $\in \{10^{-4},3{\times}10^{-4},10^{-3},3{\times}10^{-3}\}$, hidden $\in \{[64,32],[128,64],[256,128],[256,256]\}$ &
16 & lr${=}3{\times}10^{-4}$, hidden${=}[128,64]$ \\
DNN & \texttt{LunarLander-v2} &
lr $\in \{10^{-4},3{\times}10^{-4},10^{-3},3{\times}10^{-3}\}$, hidden $\in \{[64,32],[128,64],[256,128],[256,256]\}$ &
16 & lr${=}3{\times}10^{-4}$, hidden${=}[128,64]$ \\
DNN & \texttt{Acrobot-v1} &
lr $\in \{10^{-4},3{\times}10^{-4},10^{-3},3{\times}10^{-3}\}$, hidden $\in \{[64,32],[128,64],[256,128],[256,256]\}$ &
16 & lr${=}10^{-4}$, hidden${=}[256,128]$ \\
Raw-Linear & \texttt{CartPole-v1} &
lr $\in \{10^{-4},3{\times}10^{-4},10^{-3},3{\times}10^{-3}\}$, $\tau \in \{0.5,1.0,2.0,5.0\}$ &
16 & lr${=}10^{-3}$, $\tau{=}5.0$ \\
Raw-Linear & \texttt{LunarLander-v2} &
lr $\in \{10^{-4},3{\times}10^{-4},10^{-3},3{\times}10^{-3}\}$, $\tau \in \{0.5,1.0,2.0,5.0\}$ &
16 & lr${=}10^{-3}$, $\tau{=}5.0$ \\
Raw-Linear & \texttt{Acrobot-v1} &
lr $\in \{10^{-4},3{\times}10^{-4},10^{-3},3{\times}10^{-3}\}$, $\tau \in \{0.5,1.0,2.0,5.0\}$ &
16 & lr${=}10^{-3}$, $\tau{=}2.0$ \\

\multicolumn{5}{@{}l}{\textit{SustainGym}} \\
VSPG (Basis) & \texttt{hot\_dry} &
$\tau \in \{0.5,1,2,5\}$, $\eta \in \{10^{-3},5{\times}10^{-3},10^{-2},5{\times}10^{-2}\}$ &
16 & $\tau{=}2,\eta{=}5{\times}10^{-3}$ \\
VSPG (Basis) & \texttt{warm\_humid} &
$\tau \in \{0.5,1,2,5\}$, $\eta \in \{10^{-3},5{\times}10^{-3},10^{-2},5{\times}10^{-2}\}$ &
16 & $\tau{=}1,\eta{=}5{\times}10^{-3}$ \\
VSPG (FHRR) & \texttt{hot\_dry} &
$\tau \in \{0.5,1,2,5\}$, $\eta \in \{10^{-3},5{\times}10^{-3},10^{-2},5{\times}10^{-2}\}$, $w{=}1.0$ &
16 & $\tau{=}5,\eta{=}10^{-3}$ \\
VSPG (FHRR) & \texttt{warm\_humid} &
$\tau \in \{0.5,1,2,5\}$, $\eta \in \{10^{-3},5{\times}10^{-3},10^{-2},5{\times}10^{-2}\}$, $w{=}1.0$ &
16 & $\tau{=}2,\eta{=}5{\times}10^{-3}$ \\
VSPG (RFF) & \texttt{hot\_dry} &
$\tau \in \{0.5,1,2,5\}$, $\eta \in \{10^{-3},5{\times}10^{-3},10^{-2},5{\times}10^{-2}\}$, $\sigma{=}1.0$ &
16 & $\tau{=}2,\eta{=}10^{-2}$ \\
VSPG (RFF) & \texttt{warm\_humid} &
$\tau \in \{0.5,1,2,5\}$, $\eta \in \{10^{-3},5{\times}10^{-3},10^{-2},5{\times}10^{-2}\}$, $\sigma{=}1.0$ &
16 & $\tau{=}1,\eta{=}5{\times}10^{-3}$ \\
DNN & \texttt{hot\_dry} &
lr $\in \{10^{-4},3{\times}10^{-4},10^{-3},3{\times}10^{-3}\}$, hidden $\in \{[64,32],[128,64],[256,128],[256,256]\}$ &
16 & lr${=}10^{-4}$, hidden${=}[128,64]$ \\
DNN & \texttt{warm\_humid} &
lr $\in \{10^{-4},3{\times}10^{-4},10^{-3},3{\times}10^{-3}\}$, hidden $\in \{[64,32],[128,64],[256,128],[256,256]\}$ &
16 & lr${=}10^{-3}$, hidden${=}[256,256]$ \\
Raw-Linear & \texttt{hot\_dry} &
lr $\in \{10^{-4},3{\times}10^{-4},10^{-3},3{\times}10^{-3}\}$, $\tau \in \{0.5,1,2,5\}$ &
16 & lr${=}3{\times}10^{-4}, \tau=2.0$ \\
Raw-Linear & \texttt{warm\_humid} &
lr $\in \{10^{-4},3{\times}10^{-4},10^{-3},3{\times}10^{-3}\}$, $\tau \in \{0.5,1,2,5\}$ &
16 & lr${=}3{\times}10^{-3}, \tau=2.0$ \\

\multicolumn{5}{@{}l}{\textit{Value-based}} \\
QHD & \texttt{CartPole-v1} &
$\beta \in \{10^{-3},5{\times}10^{-3},10^{-2},5{\times}10^{-2}\}$, batch $\in \{2,4,10,32\}$, target $\in \{50,200\}$, buffer $\in \{2000,50000\}$ &
64 & $\beta{=}5{\times}10^{-2}$, batch${=}32$, buffer${=}2000$, target${=}50$ \\
QHD & \texttt{LunarLander-v2} &
$\beta \in \{10^{-3},5{\times}10^{-3},10^{-2},5{\times}10^{-2}\}$, batch $\in \{2,4,10,32\}$, target $\in \{50,200\}$, buffer $\in \{2000,50000\}$ &
64 & $\beta{=}10^{-2}$, batch${=}32$, buffer${=}50000$, target${=}50$ \\
QHD & \texttt{Acrobot-v1} &
$\beta \in \{10^{-3},5{\times}10^{-3},10^{-2},5{\times}10^{-2}\}$, batch $\in \{2,4,10,32\}$, target $\in \{50,200\}$, buffer $\in \{2000,50000\}$ &
64 & $\beta{=}10^{-3}$, batch${=}10$, buffer${=}50000$, target${=}50$ \\
QHD & \texttt{Empty-5x5} &
$\beta \in \{10^{-3},5{\times}10^{-3},10^{-2},5{\times}10^{-2}\}$, batch $\in \{2,4,10,32\}$, target $\in \{50,200\}$, buffer $\in \{2000,50000\}$ &
64 & $\beta{=}10^{-2}$, batch${=}10$, buffer${=}2000$, target${=}200$ \\
QHD & \texttt{DoorKey-5x5} &
$\beta \in \{10^{-3},5{\times}10^{-3},10^{-2},5{\times}10^{-2}\}$, batch $\in \{2,4,10,32\}$, target $\in \{50,200\}$, buffer $\in \{2000,50000\}$ &
64 & $\beta{=}10^{-2}$, batch${=}32$, buffer${=}50000$, target${=}200$ \\
QHD & \texttt{DoorKey-8x8} &
$\beta \in \{10^{-3},5{\times}10^{-3},10^{-2},5{\times}10^{-2}\}$, batch $\in \{2,4,10,32\}$, target $\in \{50,200\}$, buffer $\in \{2000,50000\}$ &
64 & $\beta{=}10^{-2}$, batch${=}32$, buffer${=}50000$, target${=}200$ \\
\end{longtable}
\endgroup

\section{Closed-Form Equivalence and the Sphere Constraint}
\label{sec:supp-normalization}

\begin{figure*}[h!]
\centering
\includegraphics[width=\textwidth]{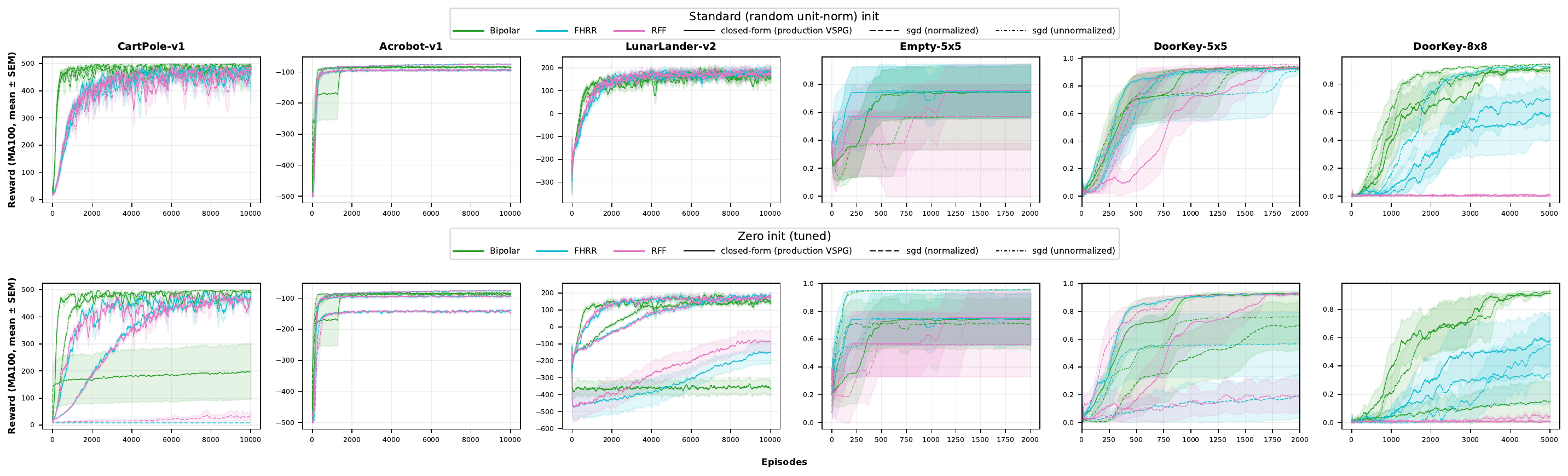}
\caption{Closed-form and directly differentiated VSPG updates under standard
unit-norm initialization (top) and zero initialization (bottom). Colors denote
the encoder; solid, dashed, and dotted lines indicate closed-form, normalized
SGD, and unnormalized SGD, respectively. Shading shows $\pm$ SEM over five
seeds. Standard initialization yields broadly similar behavior, whereas zero
initialization reveals sensitivity to sphere projection.}
\label{fig:row-normalization}
\end{figure*}

VSPG can be viewed both as a vector-symbolic actor trained by
advantage-weighted bundling and as a log-linear softmax policy over fixed HDC
features. Proposition~1 connects these views by showing that the bundling term
$\Lambda^\top S$ is exactly the sampled policy gradient of the softmax
surrogate. We examine whether the closed-form implementation reproduces direct
differentiation of the same objective, and how its random unit-norm
initialization and row-wise sphere projection affect learning.

We compare the closed-form VSPG update with directly differentiated variants
with and without row normalization. Each is evaluated under standard Gaussian
unit-norm initialization and exact zero initialization of the action-hypervector
matrix $\mathbf{C}$. Zero initialization removes the random initialization term
in Proposition~2, leaving action vectors formed entirely from accumulated
policy-gradient evidence. All configurations are independently tuned and
evaluated over five seeds across six environments and three encoders.

Under standard initialization, the differentiated variants generally exhibit
learning behavior similar to the closed-form implementation
(Figure~\ref{fig:row-normalization}, top), consistent with Proposition~1.
Larger differences appear on the harder DoorKey tasks, reflecting projection,
parameter-norm dynamics, and independently tuned optimization scales rather
than a different policy objective.

Zero initialization reveals a stronger interaction with row normalization
(Figure~\ref{fig:row-normalization}, bottom). At $\mathbf{C}=0$, the policy is
initially uniform, and the first update contains only evidence from the initial
trajectories. Immediate normalization maps this update to unit norm regardless
of its magnitude, allowing weak or noisy early evidence to determine a
full-scale action direction. Without normalization, the action-vector norms
instead grow gradually with the accumulated gradient signal, and these variants
continue to learn in several settings where the normalized versions remain
weak.

Thus, neither nonzero initialization nor row normalization is required for the
policy-gradient identity itself. Row normalization enforces the bounded
cosine-policy parameterization and fixed-scale action-memory geometry analyzed
in the main paper, but can amplify early updates when combined with zero
initialization. Overall, the differentiated results support the closed-form
implementation as an algebraic realization of policy-gradient learning over
fixed HDC features, while exposing a practical interaction between
initialization and the sphere constraint.

%%%%%%%%%%%%  Supplementary Figures  %%%%%%%%%%%%
%\clearpage

%%%%%%%%%%%%%%%%   End   %%%%%%%%%%%%%%%%
%\end{multicols}  % Method B for two-column formatting (doesn't play well with line numbers), comment out if using method A
\end{document}